\documentclass{article} 
\usepackage{macros} 

\usepackage{pifont}

\usepackage{natbib}
\usepackage{microtype}
\usepackage{graphicx}
\usepackage{subcaption} 
\usepackage{algorithmic}
\usepackage{booktabs} % for professional tables

\title{Generalized Random Direction Newton Algorithms for Stochastic Optimization}
\author{
\;\;\; Soumen Pachal\\
{\normalsize Indian Institute of Technology Madras}\\
{\normalsize \texttt{cs22d009@smail.iitm.ac.in}}
\and
Prashanth L.A.\\
{\normalsize Indian Institute of Technology Madras}\\
{\normalsize \texttt{prashla@cse.iitm.ac.in}}
\and
Shalabh Bhatnagar\\ 
{\normalsize Indian Institute of Science, Bangalore} \\
{\normalsize \texttt{shalabh@iisc.ac.in}}
\and
Avinash Achar\\
{\normalsize TCS Research, Chennai, India} \\
{\normalsize \texttt{achar.avinash@tcs.com}}
}
\date{}
\begin{document} 
\maketitle          
\begin{abstract}                          
We present a family of generalized Hessian estimators of the objective using random direction stochastic approximation (RDSA) by utilizing only noisy function measurements.
%We incorporate continuous-valued perturbations into the Hessian estimators.
The form of each estimator and the order of the bias depend on the number of function measurements. In particular, we demonstrate that estimators with more function measurements exhibit lower-order estimation bias. We show the asymptotic unbiasedness of the estimators. We also perform asymptotic and non-asymptotic convergence analyses for stochastic Newton methods that incorporate our generalized Hessian estimators. Finally, we perform numerical experiments to validate our theoretical findings.     
\end{abstract}
% \begin{keywords}  
\textit{Keywords:} Stochastic optimization, stochastic approximation, random direction stochastic approximation (RDSA), Newton-based algorithms, gradient and Hessian estimation.                        % chosen from the IFAC 
% \end{keywords}
%\end{frontmatter}

\section{Introduction}
Optimization under uncertainty has been comprehensively studied in many engineering applications, including neural networks, reinforcement learning, finance, operations research, energy systems, and signal processing, to name a few. Uncertainty may arise due to randomness in the data or the environment. Stochastic optimization is a mathematical tool for solving this optimization problem under uncertainty. 

In this paper, we are interested in solving the optimization under uncertainty problem as a stochastic optimization problem. The primary goal is to find the minima of an objective, where the analytical form of the underlying objective is not known, instead, noisy function measurements or samples are available. The noise-corrupted measurements are accessible either through an oracle or real data. Such a setting is usually referred to as zeroth-order stochastic optimization. In this setting, the goal is to find the minima of a real-valued function $F:\mathbb{R}^d \rightarrow \mathbb{R}$. More formally, our aim reduces to finding a parameter $\theta^*$  using only noisy measurements of $F$, i.e.,
\begin{equation}
    \theta^* \in \argmin_{\theta \in \mathbb{R}^d} F(\theta). 
\end{equation}
A general stochastic approximation (SA) algorithm was first proposed by Robbins and Monro (RM) \cite{robbins1951stochastic} for finding the zeros of a function using noisy measurements. Under some conditions, the RM algorithm converges to the root of the function in an almost sure sense. This method can also be applied for gradient search, where noisy measurements of the gradient are only available. Infinitesimal perturbation analysis (IPA) \cite{fu2015handbook,ho2012perturbation} and likelihood ratio (LR) \cite{rubinstein1993discrete,l1994stochastic} are such gradient search methods that require only one function measurement. The SA algorithm finds several applications, including finding the minima of an objective function \cite{s2013stochastic, prashanth2025gradient}, finding the fixed point of a function \cite{prashanth2025gradient}, reinforcement learning \cite{sutton1998reinforcement}, neural networks \cite{gurney2018introduction}, fine-tuning Large Language models \cite{malladi2023fine}, and many more. 

The update rule of a stochastic gradient search algorithm is the following: 
%\begin{small}
\begin{equation}
    \theta(n+1) = \theta(n) - a(n) \widehat{\nabla} F(\theta(n)),
\end{equation}
%\end{small}
where $a(n),n\geq 0$, are the step sizes that satisfy stochastic approximation conditions, and $\widehat{\nabla} F(\theta(n))$ is an estimate of the true gradient $\nabla F(\theta(n))$ of the objective function $F$. The two-sided (one-sided) finite difference stochastic approximation (FDSA) or Kiefer-Wolfowitz (K-W) \cite{kiefer1952stochastic} is a gradient estimation method that requires $2d \; (d+1)$ function measurements per iteration. For large $d$, this method is computationally expensive as it perturbs along each coordinate direction one at a time. To mitigate this dimension dependency, random direction stochastic approximation (RDSA) was proposed in \cite{koronacki1975random},  \cite{kushner1978stochastic} in which perturbation variables that are uniformly distributed over the surface of a unit sphere in $\mathbb{R}^d$ are employed. This method requires only two function measurements per iteration. Many random perturbation-based methods have been studied in the literature, including smoothed functional (SF) \cite{katkovnik1972convergence}, \cite{bhatnagar2007adaptive}, RDSA with uniform and asymmetric Bernoulli perturbations \cite{prashanth2016adaptive}, simultaneous perturbation stochastic approximation (SPSA) \cite{spall1992multivariate}, and many more. All these methods deal with continuous variables. Discrete simultaneous perturbation stochastic approximation (DSPSA) is proposed in \cite{wang2011discrete}, which deals with optimization on discrete variables. Recently, a general SPSA-based method, MSPSA \cite{wang2025simultaneous}, has also been proposed for mixed variables that are both discrete and continuous. 
The key idea behind these approaches is to reduce the function measurements needed to obtain zeroth-order estimates of the gradient and higher-order derivatives, thereby achieving a desired level of accuracy.   
%\todoi{Add \cite{wang2025simultaneous} and \cite{wang2011discrete} for discrete setup. }
Amongst the most popular methods of the class of random perturbation approaches are SPSA and SF, as they are easy to implement, exhibit promising performance, and normally require just two function measurements per iteration. In order to generate the two perturbed parameters at each iterate for which the simulations are run, SPSA methods randomly perturb all the parameter components simultaneously using symmetric $ \pm 1 $-valued Bernoulli random variables. A family of generalized simultaneous perturbation stochastic approximation (GSPSA)  gradient estimators has recently been proposed in \cite{pachalyl2025generalized} that helps to significantly reduce the bias at the cost of a few more function measurements. Here, given a desired level of accuracy, one can construct a zeroth-order gradient estimator with the aforementioned few extra function measurements that achieves that level of accuracy in the gradient estimator. 

%\todoi{\st{Motivate for Newton based algorithm}}
Newton-based simultaneous perturbation algorithms have also been proposed in the literature, see \cite{spall2000adaptive}, \cite{bhatnagar2005adaptive}, \cite{bhatnagar2007adaptive}, \cite{spall2009feedback},
\cite{prashanth2016adaptive}, and 
\cite{prashanth2019random}, respectively. 
In practice, it has been observed that the stochastic gradient algorithms have many limitations, such as (i) slow convergence near optima, (ii) not being scale invariant, and (iii) being sensitive to the choice of the initial learning rate. Newton algorithms in general would typically require knowledge of the minimum eigenvalue of the Hessian for optimal convergence rate (see \cite{prashanth2016adaptive}, \cite{fabian1967stochastic}). In stochastic optimization settings, the minimum eigenvalue is typically not known. Stochastic Newton methods mitigate this eigenvalue dependence. These methods require the computation of the Hessian in addition to the gradient, and perform the following iterative update rule: 
\begin{equation}
    \theta (n + 1) = \theta (n) - a(n)\Theta \Big(\bar{\mathcal{H}}_n\Big)^{-1}\widehat{\nabla} F(\theta (n)), 
\end{equation} 
where $\widehat{\nabla} F(\theta (n))$ is an estimate of the true gradient $\nabla f(\theta)$, $a(n), n\geq 0$ are the step sizes, $\bar{\mathcal{H}}_n$ is an estimate of the true Hessian $\nabla^2 F(\theta)$, and $\Theta$ is an operator that projects  onto the set of positive definite matrices. This is needed to ensure that the algorithm proceeds along a descent direction at each step. 

We consider the RDSA-based adaptive Newton method similar to \cite{prashanth2016adaptive}. In this paper, our goal is to generalize the RDSA-based Hessian estimation method of \cite{prashanth2016adaptive} to achieve a lower-order bias but with a slight increase in the number of function measurements used in the Hessian estimator. 
 A similar generalization for gradient estimation was done recently in \cite{pachalyl2025generalized} where a family of generalized simultaneous perturbation-based gradient search estimators, including SPSA, SF, and RDSA was proposed that helps achieve a reduced bias.   
We now briefly describe our key contributions below. 
\begin{enumerate}
    \item \textbf{Generalized Hessian estimators:} We present a family of generalized RDSA-based Hessian estimators obtained from noisy function measurements. The form of each estimator differs from another in the required number of function measurements per iteration. Estimators with more function measurements result in a bias of reduced order.   
    \item \textbf{Estimation bounds:} We provide bounds on the bias and variance of our proposed Hessian estimators under standard assumptions. These bounds show that our proposed family of Hessian estimators have lower bias than 2SPSA \cite{spall2000adaptive}, 2RDSA \cite{prashanth2016adaptive} and other simultaneous perturbation-based Hessian estimators in the literature. 
    %\item \textbf{Asymptotic unbiasedness:} We prove that our proposed family of Hessian estimators is unbiased asymptotically.      
    %Unbiased estimators (asymptotic). 
    \item \textbf{Stationary convergence:} For a stochastic Newton method that incorporates our generalized Hessian estimator and the generalized gradient estimator from \cite{pachalyl2025generalized}, we establish asymptotic convergence to the set of first-order stationary points (FOSPs), i.e., points where the gradient of the objective vanishes. %We prove it for general $(2k+1)$-measurements Hessian estimators.  
    \item \textbf{Escaping saddle points:} At second-order stationary points (SOSPs), the gradient vanishes and the Hessian is positive semi-definite. Such points coincide with local minima if all saddle points are strict, cf. \cite[Chapter 7]{prashanth2025gradient}. We analyze a zeroth-order variant of the well-known cubic-regularized Newton method that incorporates our generalized Hessian estimator in conjunction with the gradient estimator from \cite{pachalyl2025generalized}. We establish a non-asymptotic bound that quantifies the convergence of the zeroth-order cubic Newton algorithm to an $\epsilon$-SOSP, which is an approximation to SOSP (see Section \ref{sec:non-asymptotic} for the details). 
    %Our non-asymptotic bound is $O(\epsilon^{-(\frac{7}{2} + \frac{1}{k})})$, when the gradient and Hessian estimators  have bias bounds of $O(\delta^k)$.
    For a given $\epsilon > 0$, there exists a $\delta > 0$ (the perturbation parameter) such that for gradient and Hessian estimators with a bias bound of $O(\delta^k)$, the non-asymptotic bound on the sample complexity is $O(\epsilon^{-(\frac{7}{2} + \frac{2}{k})})$. The latter gradient/Hessian estimation bounds are achievable with generalized estimators formed using $(2k+1)$ function measurements. To the best of our knowledge, we are the first to provide a non-asymptotic bound for a zeroth-order stochastic Newton algorithm with gradient/Hessian estimators formed using the simultaneous perturbation method.  
%    \todoi{Say something about escaping saddle points.}
    \item \textbf{Experiments:} We show the results of simulation experiments on the Rastrigin objective. Our numerical results show that the Hessian estimators with more function measurements provide better performance guarantees for a given simulation budget. 
\end{enumerate} 
%\todoi{TBD - add spall 2002, spall 2009, balanced 2016} 

\textbf{Related work}. 
%\section{Related work}
%\todoi{Add more, including Krishna Kumar and compare}   
We now compare our proposed method with closely related existing approaches for Hessian estimation. In \cite{fabian1971stochastic}, the classical FDSA-based Hessian estimation method was proposed, and this scheme requires $O(d^2)$ function measurements, making it computationally expensive in higher dimensions. To overcome this issue, a simultaneous perturbation-based Hessian estimation method, in the similar spirit of \cite{spall1992multivariate}, was proposed in \cite{spall2000adaptive}. The key advantage of this approach is that it requires only four function measurements per iteration, irrespective of $d$. A modified Hessian estimation method was proposed in \cite{zhu2002modified}, where the eigenvalues are projected to the half-plane.
%Another Hessian estimation method using feedback and weighting mechanisms was developed in \cite{spall2009feedback}.
In \cite{bhatnagar2005adaptive}, zeroth-order stochastic Newton algorithms with Hessian estimates formed using four, three, two, and one function measurements, respectively, are proposed, and in \cite{bhatnagar-prashanth}, Newton algorithms with three-simulation balanced Hessian estimators are proposed, where the Hessian inversion at each step is achieved using an iterative method. Smoothed functional-based second-order methods with one and two measurements have been explored in \cite{bhatnagar2007adaptive} with Gaussian random perturbations. The same simulations are used to estimate both the gradient and the Hessian there. More recently, an RDSA-based Hessian estimator with uniform and asymmetric Bernoulli perturbation parameters has been proposed in \cite{prashanth2016adaptive}, which requires three function measurements per iteration. Further, in \cite{prashanth2019random}, Hessian estimators based on deterministic perturbation sequence construction are presented and the algorithms analyzed. All these random perturbation-based Hessian estimation methods have bias either $O(\delta)$ or $O(\delta^2)$, where $\delta$ is the perturbation constant. In contrast, our Hessian estimators are able to achieve lower bias $O(\delta^k)$ at an additional cost of $(2k+1)$-function measurements. In \cite{balasubramanian2022zeroth}, a zeroth-order variant of the well-known cubic-regularized Newton method with the SF-based gradient and Hessian estimators has been studied under an additional smoothness assumption on the sample performance. In comparison, we do not make this assumption, and more importantly, we analyze a cubic-regularized Newton method with a generalized Hessian estimator that has a tunable bias. While our non-asymptotic bound is weaker compared to \cite{balasubramanian2022zeroth} owing to the more general setting that we consider, for large values of $k$, our bound is very close to the one in the aforementioned reference.  
%\todoi{Add}
%ADD \cite{spall2009feedback}, \cite{zhu2002modified} 

\textbf{Organization}. The rest of this article is organized as follows. In Section \ref{sec:GRDSA_estimators}, we first formulate the Hessian operator and then present Hessian estimators in multiple scenarios. % and later discuss the general $(2k+1)$-measurement Hessian estimation scheme. 
%In Section \ref{sec:unequal_truncation}, we present Hessian operators with unequal transaction order for both the special case and general cases. 
%In Section \ref{sec:Gen_Hessian_balanced}, we present the balanced version of Hessian estimators and the general form of $(2k-1)$-measurements balanced Hessian estimators.
In Section \ref{sec:asymptotic}, we analyze the asymptotic behaviour of the stochastic Newton method that incorporates our proposed Hessian estimators. In Section \ref{sec:non-asymptotic}, we provide the non-asymptotic analysis of cubic-regularized Newton method. % that incorporates our proposed Hessian estimators.
In Section \ref{sec:proofs}, we provide all the convergence proofs for all the claims in prior sections. We show the results of numerical experiments in Section \ref{sec:simul} that are seen to validate our theoretical findings. Finally, Section \ref{sec:con} presents the concluding remarks and outlines a few future research directions. 
%\todoi{Rewrite it}
\section{Generalized RDSA-based Hessian Estimation } %(Unbalanced Case) 
\label{sec:GRDSA_estimators}
%The current work is strongly motivated from \cite{pachalyl2025generalized}.
A family of generalized SPSA-based gradient estimators has been proposed in \cite{pachalyl2025generalized} that provides the flexibility of lowering estimator bias at the cost of extra function measurements. In particular, they propose two families of estimators, which generalize the one-sided SPSA estimate \cite[Section 5.3]{s2013stochastic} and the balanced SPSA estimate \cite{spall1992multivariate}, respectively. The two families are essentially based on two different infinite series expansions of the differentiation operator. Order-1 truncation of either of the infinite series expansions gives the unbalanced SPSA and balanced SPSA estimators, respectively. 

In this section, we introduce a family of novel estimators for the Hessian $\nabla^2 F$, which is based on the family of gradient estimators \cite{pachalyl2025generalized}, generalizing the unbalanced SPSA operator. The estimator bias can be made arbitrarily small (by taking into account a proportionately high order of the perturbation constant, a quantity typically less than 1) by considering a sufficiently large number of function measurements.  

Consider the multi-dimensional differentiation operator ${\mathcal D}^\beta$ %similar to \cite{pachalyl2025generalized}, 
where $\beta = [\beta_1,\beta_2 \dots \beta_d]$, $\beta_i \in \mathbb{N}_0$ (the set of non-negative integers) defined as follows:   
\[{\displaystyle {\mathcal D}^\beta F(\theta) \triangleq \frac{\partial^{|\beta|}F(\theta)}{\partial\theta_1^{\beta_1}\cdots \partial\theta_d^{\beta_d}}},\]
with $|\beta|=\beta_1+\cdots+\beta_d$.
As an example, if $\beta = [1,0,\cdots,0]$, then ${\mathcal D}^\beta F(\theta) = \frac{\partial F(\theta)}{\partial\theta_1}$. Likewise, if 
 $\beta = [2,3,0\cdots,0]$, then ${\mathcal D}^\beta F(\theta) = \frac{\partial^{5}F(\theta)}{\partial\theta_1^{2} \partial\theta_2^{3}}$.
Now let  $\Delta^\beta=\Delta_1^{\beta_1}\Delta_2^{\beta_2}\cdots\Delta_d^{\beta_d}$ and 
%$\theta^\beta=\theta_1^{\beta_1}\cdots\theta_d^{\beta_d}$,
 $\beta! =\beta_1!\beta_2!\cdots\beta_d!$. 
%for $\beta_1,\cdots,\beta_d \geq 0$. 

Using the above operator, the multivariate Taylor's expansion is as follows: 
\begin{equation}
\label{mte}
F(\theta+\delta\Delta) = \sum_{|\beta|=0}^{\infty} \frac{{\mathcal D}^\beta F(\theta)}{\beta!}(\delta\Delta)^\beta
= \hspace{-2mm} \sum_{|\beta|=0}^{\infty} \left(\frac{(\delta\Delta{\mathcal D})^\beta}{\beta!}\right)F(\theta),
\end{equation}
where $F$ is assumed to be infinitely many times continuously differentiable. 
Using the exponentiation operator and  shift operator 
$\tau_{\delta\Delta}F(\theta) \equiv F(\theta+\delta\Delta)$, we can compactly rewrite \eqref{mte} as
\[
\tau_{\delta\Delta} = \exp(\delta\Delta{\mathcal D}).
\]
Rearranging the equation above and using the log expansion (as shown in \cite{pachalyl2025generalized}), we arrive at an interesting series expansion of the derivative operator, which is given below.  
\begin{equation}
	\label{gradient_operator_unbalanced}
{\mathcal D} = \frac{1}{\delta\Delta}
\sum_{j=1}^{\infty}\frac{(\tau_{\delta\Delta} - {\mathcal I})^j}{j}(-1)^{j+1}.
\end{equation}
It is evident that truncating the above summation to just one term yields the unbalanced SPSA operator. In this sense, the above series expansion \eqref{gradient_operator_unbalanced} neatly generalizes the unbalanced SPSA operator.  
In \cite{pachalyl2025generalized}, it was shown that the gradient estimate obtained by truncating ${\mathcal D}$ to the first $k$ terms yields a progressively reduced bias of $O(\delta^{k})$, $k\geq 1$. The extra cost that one pays for this reduced bias is a progressively increasing number of function measurements, namely $k + 1$. 

In this paper, we explore such families of estimators for the Hessian.  By definition, Hessian  is obtained by applying the ${\mathcal D}$ operator \eqref{gradient_operator_unbalanced} twice. The idea now is to consider a truncated version of the ${\mathcal D}$ operator as in \cite{pachalyl2025generalized} and apply it twice on $F(\theta)$. We denote the truncated version of $\mathcal{D}$ upto first $k$ terms as $\mathcal{D}^k$, which is of the form below: 
\begin{equation}
	\label{eqn:grad_op_unbal_trun1}
{\mathcal D}^k = \frac{1}{\delta\Delta}
\sum_{j=1}^{k}\frac{(\tau_{\delta\Delta} - {\mathcal I})^j}{j}(-1)^{j+1}.
\end{equation}
On expanding the inner term, $(\tau_{\delta\Delta} - {\mathcal I})^j$, using the Binomial theorem and some rearrangement based on powers of $\tau_{\delta\Delta}$ \cite{pachalyl2025generalized}, we obtain the following form for $\mathcal{D}^k$:
\begin{equation}
\label{eqn:grad_op_unbal_trun2}
	\mathcal{D}^k = \frac{1}{\delta \Delta}\sum_{l = 0}^{k} \frac{(-1)^{1-l}c_l^k \tau_{\theta + l\delta\Delta}}{l!},
\end{equation}
where 
\begin{equation}
c_l^k = 
\begin{cases}
    \frac{1}{l} \prod \limits_{j = 0}^{l-1} (k - j) & l \geq 1, \\
	\sum_{j = 1}^k \frac{1}{j} & l = 0. \\
\end{cases}
	\label{eqn:clk}
\end{equation}
Note that since the shift operator is linear, $\mathcal{D}^k$  is also linear for any finite $k$.
%\footnote {In this paper, we mostly restrict ourselves to the case where perturbation constant and vector from both $\mathcal{D}^k$ operators are same.}
Since $\mathcal{D}^k$ needs to be applied twice, more possibilities arise here than in the gradient case. When $\mathcal{D}^k$ is applied a second time, one can use a different perturbation constant and perturbation vector (stochastically independent of the first one) as in \cite{spall2000adaptive}.  Another possibility is that 
$\mathcal{D}^k$  when applied twice, can be truncated up to a different order (say $k_1$ the first time and $k_2$ the second, for some $k_1,k_2\geq 1$. We first consider the simple case where for both truncated operators (i) the perturbation vector and constant are the same; and  (ii) both truncations are of the same order (section \ref{sec:equal_truncation}). Under these conditions, we prove an interesting bias reduction structure with increasing order of truncation.  The bias here decays polynomially with $\delta$, the perturbation constant. 
%\todoi{However, when the truncation orders are different in the Hessian estimate, the bias decay is only dependent on the lower of the two truncation orders (a strong guess based on a simpler case).   We show this rigorously in Sec.~\ref{sec:unequal_truncation}. In this section, we only analyze the truncated operator, which generalizes the unbalanced finite difference gradient estimate. }  
%In Sec. \ref{sec:Gen_Hessian_balanced} we perform a similar analysis of the Hessian estimator obtained by the truncated operator $\mathcal{D}^k$, which generalizes the balanced finite difference gradient estimate. Here also, when the order of truncation is same, the bias decay is polynomial in $\delta$ and the order of decay is roughly twice the order of truncation.  

%\todoi{In Sec. , we discuss the possibility of using $2$ different and independent perturbation vectors. The interesting bias decay reduction with order of truncation carries forward in this more general setting as well. One drawback of these estimators is that they need significantly more function measurements compared to the case of a single perturbation vector (Does it give any variance advantage?).  We note that under symmetric Bernoulli perturbations, for the unbalanced case,  the bias decay is polynomial in $\delta$ and the decay order is again the order of truncation. While in the balanced case, the decay order is twice the truncation order. }

%\todoi{Should we look for a U, V generalization of the Hessian estimate?} 
 
\subsection{Hessian estimate with equal truncation}
\label{sec:equal_truncation}

In this section, we consider a truncated Hessian operator of the form $\mathcal{H}^i = \mathcal{D}^i \circ \mathcal{D}^i$, where the operator $\mathcal{D}$ is truncated at the $i$-th position in the series \eqref{eqn:grad_op_unbal_trun1} (i.e., with $k=i$). 
We choose $\Delta = (\Delta_1, \Delta_2, \ldots, \Delta_d)^T$ to be a $d$-vector of independent standard Gaussian random variables. %Further, assume $\Delta_i$ is independent of $\Delta_j$, for $i \neq j$. 
Before tackling the general case of the Hessian estimate for an arbitrary order $i$,  we first demonstrate this calculation for smaller truncation orders for ease of illustration.
\subsubsection{Hessian estimate with $O(\delta)$ bias} 

Using the function values at $(\theta + \delta\Delta)$ and $\theta$, we form a Hessian estimate in the manner as explained below. First, let 
\begin{equation}
\label{eq:Hessian_3}
  \mathcal{D}_i^1F(\theta) =  \Delta_i \left[\frac{F(\theta + \delta\Delta) - F(\theta)}{\delta}\right]. 
\end{equation}
This estimate coincides with the two-measurement unbalanced gradient estimate proposed in \cite{s2013stochastic}. Applying the operator $\mathcal{D}_j^1$ on \eqref{eq:Hessian_3}, we obtain  
\begin{align*}
%&
    \widehat{\mathcal{H}}_{i,j}^1 F(\theta) &=\mathcal{D}_j^1\circ\mathcal{D}_i^1 F(\theta) \\
   & = \Delta_i \left[\frac{\mathcal{D}_j^1F(\theta + \delta\Delta) - \mathcal{D}_j^1F(\theta)}{\delta}\right] \\
    &= \Delta_j \Delta_i \left[\frac{F(\theta + 2 \delta\Delta) - 2 F(\theta + \delta\Delta) + F(\theta)}{\delta^2}\right]. 
\end{align*}   
% \begin{align*}
%     \mathcal{H}_{ij}^1 F(\theta) &= \Delta_i \Delta_j\\
%     &\times \left[\frac{F(\theta + 2 \delta\Delta) - 2 F(\theta + \delta\Delta) + F(\theta)}{\delta^2}\right].
% \end{align*} 
Using matrix notation, we have 
\begin{align*}
    \widehat{\mathcal{H}}^1 F(\theta) &= \Delta \Delta^T\left[\frac{F(\theta + 2 \delta\Delta) - 2 F(\theta + \delta\Delta) + F(\theta)}{\delta^2}\right].
\end{align*}
Using Taylor series expansion of $F(\theta + 2 \delta\Delta)$ and $F(\theta + \delta\Delta)$, we obtain
\begin{align}
    \widehat{\mathcal{H}}^1 F(\theta) &= \frac{\Delta \Delta^T}{\delta^2}\left[\delta^2 \Delta^T \nabla^2 F(\theta) \Delta + \mathbf{1}_{d \times d} O(\delta^3)\right]\nonumber\\
    &= (\Delta \Delta^T) [ \Delta^T\nabla^2 F(\theta)\Delta] + \mathbf{1}_{d \times d}O(\delta), \label{eq:s12}
\end{align}
where $\mathbf{1}_{d \times d}$ denotes the $d \times d$ matrix with all entries one. 
%To obtain a Hessian estimate, we apply $\mathcal{D}_i^1$ on $F(\theta + \delta \Delta)$ and $F(\theta)$, but this will not provide true Hessian.
Taking expectation over $\Delta$, the first term on the RHS above does not simplify to $\nabla^2 F(\theta)$. More precisely,
 $\E[(\Delta \Delta^T)(\Delta^T\nabla^2 F(\theta)\Delta) ] =  2\nabla^2 F(\theta) + \textrm{tr}(\nabla^2 F(\theta))\mathit{I}_{d}$, where $\textrm{tr}(A)$ denotes the trace of a matrix $A$ and $\mathit{I}_{d}$ denotes the $d \times d$ identity matrix. Note that this will not provide the true Hessian. 

To overcome the bias in the first term of \eqref{eq:s12}, we propose a slightly modified Hessian estimator $\mathcal{H}^1$ by using $(\Delta \Delta^T - \mathit{I})$ instead of $\Delta \Delta^T$ to recover the true Hessian. The modified Hessian estimator is given as follows: 
%By applying $\mathcal{D}^1$ on both $F(\theta + \delta\Delta)$ and $F(\theta)$, we obtain the following three measurements Hessian estimator:
\begin{align*}
    \mathcal{H}^1 F(\theta) &= (\Delta \Delta^T - \mathit{I})
     \left[\frac{F(\theta + 2 \delta\Delta) - 2 F(\theta + \delta\Delta) + F(\theta)}{\delta^2}\right].
\end{align*}  
%Here we employ a slightly modified Hessian $\mathcal{H}^1$ estimator with scaling matrix $(\Delta \Delta^T - \mathit{I})$ to recover the true Hessian. 
% Using Taylor series expansion of $F(\theta + 2 \delta\Delta)$ and $F(\theta + \delta\Delta)$ followed by straightforward simplifications, we obtain
% \begin{align*}
%     \mathcal{H}^1 F(\theta) &= \frac{(\Delta \Delta^T - \mathit{I})}{\delta^2}\left[\delta^2 \Delta^T \nabla^2 F(\theta) \Delta + O(\delta^3)\right]\\
%     &=  \nabla^2 F(\theta) + \mathbf{1}_{d \times d}O(\delta), 
% \end{align*}
% where $\mathbf{1}_{d \times d}$ denotes the $d \times d$ matrics with entries 1. 
\subsubsection{ Hessian estimate with $O(\delta^2)$ bias}
Three measurements GRDSA is defined as follows: %An order-$2$ truncation of the operator $\mathcal{D}$ yields   
\begin{equation} \label{eq:gspsa_3}
    \mathcal{D}_i^2 F(\theta) = \Delta_i \left[\frac{4F(\theta + \delta \Delta) - 3F(\theta) - F(\theta + 2\delta \Delta)}{2\delta}\right]. %\nonumber
\end{equation}
Applying $\mathcal{D}_j^2$ on \eqref{eq:gspsa_3}, we obtain %the five-measurement Hessian estimator 
\begin{align*}
%&
    \widehat{\mathcal{H}}_{ij}^2 F(\theta) &=\mathcal{D}_j^2 \circ \mathcal{D}_i^2 F(\theta) \\
    &= \Delta_i\left[\frac{4\mathcal{D}_j^2 F(\theta + \delta \Delta) - 3\mathcal{D}_j^2 F(\theta) - \mathcal{D}_j^2 F(\theta + 2 \delta \Delta)}{2\delta}\right]\\
    & = \frac{\Delta_j \Delta_i }{4 \delta^2 } \Big[22F(\theta + 2 \delta \Delta) - 24 F(\theta + \delta \Delta) -8F(\theta + 3\delta \Delta) + F(\theta + 4 \delta \Delta) + 9F(\theta)\Big]. 
\end{align*} 
%The above expression can be represented in combined form as follows: 
In matrix form, we have
\begin{align*}
&
    \widehat{\mathcal{H}}^2 F(\theta) = \frac{\Delta \Delta^T }{4 \delta^2 } \Big[22F(\theta + 2 \delta \Delta) - 24 F(\theta + \delta \Delta) -8F(\theta + 3\delta \Delta) + F(\theta + 4 \delta \Delta) + 9F(\theta)\Big]. 
\end{align*}
% In the similar manner mentioned above, we apply $\mathcal{D}^2$ on $F(\theta + 2\delta\Delta)$, $F(\theta + \delta\Delta)$ and $F(\theta)$ to obtain the following five measurements Hessian estimation: 
% \begin{align*}
% &
%     \mathcal{H}^2 F(\theta) = \frac{(\Delta \Delta^T - \mathit{I})}{4 \delta^2 } \Big[22F(\theta + 2 \delta \Delta) - 24 F(\theta + \delta \Delta)\\
%    & \qquad \qquad \qquad-8F(\theta + 3\delta \Delta) + F(\theta + 4 \delta \Delta) + 9F(\theta)\Big]. 
% \end{align*} 

Therefore, by using Taylor series expansion of $F(\theta + 2\delta\Delta)$, $F(\theta + \delta\Delta)$ and $F(\theta)$, we get
\begin{align*}
    \widehat{\mathcal{H}}^2 F(\theta) &= \frac{\Delta \Delta^T }{2! 4\delta^2}\Big[8\delta^2 \Delta^T \nabla^2 F(\theta) \Delta + \mathbf{1}_{d \times d} O(\delta^4)\Big]\nonumber\\
    &= \Delta \Delta^T [\Delta^T \nabla^2 F(\theta) \Delta] + \mathbf{1}_{d \times d} O(\delta^2).\nonumber
\end{align*} 
As discussed above, the first term above is not an unbiased estimate of the true Hessian. To mitigate this issue, we use the following modified Hessian estimator:
\begin{align*}
&
    \mathcal{H}^2 F(\theta) = \frac{(\Delta \Delta^T - \mathit{I})}{4 \delta^2 } \Big[22F(\theta + 2 \delta \Delta) - 24 F(\theta + \delta \Delta) -8F(\theta + 3\delta \Delta) + F(\theta + 4 \delta \Delta) + 9F(\theta)\Big]. 
\end{align*}
\subsubsection{Hessian estimate with $O(\delta^3)$ bias}
Four measurements GRDSA is defined as follows:
\begin{align}
& 
    \mathcal{D}_i^3 F(\theta) = \frac{\Delta_i}{6\delta } \Big[2F(\theta + 3\delta \Delta) - 9F(\theta + 2\delta\Delta)  + 18F(\theta + \delta\Delta)- 11F(\theta) \Big]. \nonumber
\end{align}
Applying $\mathcal{D}_j^3$ on the above, we get % and multiplying with the same scaling matrix $(\Delta \Delta^T - \mathit{I})$, we obtain % Thus, the seven measurements Hessian estimation is as follows: 
\begin{align*}
%&
    \widehat{\mathcal{H}}_{ij}^3 F(\theta) &=\mathcal{D}_j^3 \circ \mathcal{D}_i^3 F(\theta) \\
    &= \frac{\Delta_i}{6\delta }\Big[2\mathcal{D}_j^3 F(\theta + 3\delta \Delta) -9 \mathcal{D}_j^3 F(\theta + 2\delta \Delta) + 18\mathcal{D}_j^3 F(\theta + \delta \Delta) - 11\mathcal{D}_j^3 F(\theta )\Big] \\
    & = \frac{\Delta_j \Delta_i}{36\delta^2} \Big[4F(\theta + 6 \delta \Delta) -  36F(\theta + 5\delta \Delta)  +153F(\theta + 4\delta \Delta) - 368F(\theta + 3 \delta \Delta) + 121F(\theta)\\ 
    &\qquad \qquad \qquad\qquad +522F(\theta + 2 \delta\Delta) -396F(\theta + \delta\Delta)  
      \Big].  
\end{align*} 
%Using the scaling matrix $(\Delta \Delta^T - \mathit{I})$, 
%Thus, we obtain the following seven measurements Hessian estimator is given as follows: 
In matrix notation, we have
\begin{align*}
&
    \widehat{\mathcal{H}}^3 F(\theta)= \frac{\Delta \Delta^T}{36\delta^2} \Big[4F(\theta + 6 \delta \Delta) -  36F(\theta + 5\delta \Delta)  +153F(\theta + 4\delta \Delta) - 368F(\theta + 3 \delta \Delta) + 121F(\theta)\\ 
    &\qquad \qquad \qquad\qquad +522F(\theta + 2 \delta\Delta) -396F(\theta + \delta\Delta)  
      \Big].  
\end{align*} 
Using Taylor series expansion, we obtain
\begin{eqnarray}
    \mathcal{H}^3 F(\theta) &=& \frac{\Delta \Delta^T}{ 2!36\delta^2}\Big[72\delta^2 \Delta^T \nabla^2 F(\theta) \Delta + \mathbf{1}_{d \times d}  O(\delta^5)\Big] \nonumber\\
    &=&  (\Delta \Delta^T)[\Delta^T \nabla^2 F(\theta)\Delta] +\mathbf{1}_{d \times d}  O(\delta^3).\nonumber
\end{eqnarray} 
In a similar manner, we use the following a modified Hessian estimator: 
\begin{align*}
&
    \mathcal{H}^3 F(\theta) = \frac{(\Delta \Delta^T - \mathit{I})}{36\delta^2} \Big[4F(\theta + 6 \delta \Delta) -  36F(\theta + 5\delta \Delta)  +153F(\theta + 4\delta \Delta) - 368F(\theta + 3 \delta \Delta) + 121F(\theta)\\ 
    &\qquad \qquad \qquad\qquad \qquad +522F(\theta + 2 \delta\Delta) -396F(\theta + \delta\Delta)  
      \Big].  
\end{align*} 
\subsubsection{Hessian estimate with $O(\delta^k)$ bias}
Recall the GRDSA operator \eqref{eqn:grad_op_unbal_trun1} formed using $k+1$ function measurements is given by
\begin{equation} 
	\mathcal{D}_i^k F(\theta) = \frac{\Delta_i}{\delta}\sum_{l = 0}^{k} \frac{(-1)^{1-l}c_l^k F(\theta + l\delta\Delta)}{l!}, \label{eq:kth_grdsa}
\end{equation}
where $
c_l^k = 
\begin{cases}
    \frac{1}{l} \prod \limits_{j = 0}^{l-1} (k - j) & l \geq 1, \\
	\sum_{j = 1}^k \frac{1}{j}. & l \geq 0.
\end{cases}
$

Applying $\mathcal{D}_j^k$ on \eqref{eq:kth_grdsa}, we get% this twice, we obtain a $(2k +1)$-measurements based  Hessian estimator as follows:  

\begin{align*}
     \widehat{\mathcal{H}}_{ij}^{k} F(\theta)&= \mathcal{D}_j^k \circ \mathcal{D}_i^k F(\theta)\nonumber\\
    &= \;\Delta_i \left[\sum_{l = 0}^{k}\frac{(-1)^{1-l}c_l^k \mathcal{D}_j^k F(\theta + l\delta\Delta)}{\delta l!}\right] \\
    & = \frac{\Delta_j \Delta_i }{\delta^2}\sum_{l = 0}^k \sum_{m = 0}^{k}\hspace{-2mm}\frac{(-1)^{-l-m} c_l^k c_m^kF(\theta + (l+m)\delta\Delta)} {l! m!}.
\end{align*} 
Using matrix notation, we get
\begin{align}
&
    \widehat{\mathcal{H}}^{k} F(\theta) = \frac{\Delta \Delta^T}{\delta^2}\sum_{l = 0}^k \sum_{m = 0}^{k}\hspace{-2mm}\frac{(-1)^{-l-m} c_l^k c_m^kF(\theta + (l+m)\delta\Delta)} {l! m!}. \label{eq:G2RDSA_unbalanced}
\end{align}
As discussed earlier, $\widehat{\mathcal{H}}^k$ fails to provide true Hessian, we use slightly modified Hessian estimator $\mathcal{H}^{k}$ as follows:  
\begin{align}
&
    \mathcal{H}^{k} F(\theta) = \frac{(\Delta \Delta^T - \mathit{I})}{\delta^2}\sum_{l = 0}^k \sum_{m = 0}^{k}\hspace{-2mm}\frac{(-1)^{-l-m} c_l^k c_m^kF(\theta + (l+m)\delta\Delta)} {l! m!}. \label{eq:G2RDSA_unbalanced}
\end{align} 
To characterize the bias of the Hessian estimator, we make the below assumption. 
\begin{assumption}
\label{ass:cont_diff}
    The function $f$ is $(k+1)$-times continuously differentiable with $\mathbf{L}_{k+2}(\theta) \stackrel{\triangle}{=} \nabla^{k+2}_{i_1, i_2, \ldots, i_{k+2}}f(\theta) < \infty$, $i_1, i_2, \ldots, i_{k+2} \in \{1, 2, \ldots,d\}$, $k\geq 1$ and for any $\theta \in \mathbb{R}^d$. 
\end{assumption}  
%The result below characterizes the bias of the Hessian estimator defined above.
\begin{lemma}
Under assumption \ref{ass:cont_diff}, we have
    \begin{equation*}
        \mathcal{H}^k F(\theta) =(\Delta \Delta^T - \mathit{I})[\Delta^T \nabla^2 F(\theta) \Delta]  + \mathbf{1}_{d \times d} O(\delta^k),   
    \end{equation*} 
for any $k \geq 1$. 
    \label{lemma:kth_order}
\end{lemma}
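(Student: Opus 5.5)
The plan is to reduce the statement to a one-dimensional finite-difference identity along the ray $\theta + t\Delta$, and then use moment identities for the coefficients of the truncated operator $\mathcal{D}^k$.

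\textbf{Setup.} Fix $\Delta$ and set $g(t) = F(\theta + t\Delta)$. Write $a_l = (-1)^{1-l}c_l^k/l!$ for $l=0,\dots,k$. Since $(-1)^{-l-m} = (-1)^{2-l-m}$, the scalar factor in $\mathcal{H}^k F(\theta)$ is
\begin{equation*}
S(\delta) = \delta^{-2}\sum_{l=0}^k\sum_{m=0}^k a_l a_m\, g((l+m)\delta).
\end{equation*}
Because the matrix factor $(\Delta\Delta^T - I)$ is common to both sides of the claim, it suffices to show $S(\delta) = g''(0) + O(\delta^k)$ and then use $g''(0) = \Delta^T\nabla^2F(\theta)\Delta$. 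The remaining factor $(\Delta\Delta^T - I)$ times the scalar remainder becomes the entrywise $\mathbf{1}_{d\times d}O(\delta^k)$ term. Here, as in the computations for $k=1,2,3$ above, the polynomial dependence on $\Delta$ is absorbed into the constant.

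\textbf{Key step: moment identities.} Let $M_p = \sum_{l=0}^k a_l\, l^p$. I claim that $M_p = \mathbb{1}\{p=1\}$ for $0 \le p \le k$. To see this, take the one-dimensional version of \eqref{eqn:grad_op_unbal_trun1} with step $1$, namely $\sum_{j=1}^k (-1)^{j+1}(\tau_1 - \mathcal{I})^j/j$. By the rearrangement leading to \eqref{eqn:grad_op_unbal_trun2}, this operator equals $\sum_l a_l\tau_l$. It agrees with the full series $\log\tau_1 = \frac{d}{dt}$ on polynomials of degree $\le k$, because $(\tau_1 - \mathcal{I})^j$ annihilates such polynomials for $j > k$. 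Applying it to $t^p$ and evaluating at $t=0$ gives $\sum_l a_l l^p = p\,0^{p-1} = \mathbb{1}\{p=1\}$. I expect this step to be the main point requiring care: the formal log-series manipulation from \cite{pachalyl2025generalized} has to be justified as an exact identity on polynomials. As a fallback, I would verify the identity directly from the binomial expansion of $(\tau_1 - \mathcal{I})^j$ and the finite-difference fact $\sum_i (-1)^{j-i}\binom{j}{i}i^p = 0$ for $p<j$.

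\textbf{Taylor expansion.} Expand $g((l+m)\delta) = \sum_{p=0}^{k+1} g^{(p)}(0)\,((l+m)\delta)^p/p! + R_{l,m}$. Under Assumption \ref{ass:cont_diff}, with bounded derivatives of order $k+2$, the remainder satisfies $|R_{l,m}| \le C(2k)^{k+2}\delta^{k+2}$. The coefficient of $g^{(p)}(0)\delta^p/p!$ in the double sum is
\begin{equation*}
\sum_{l,m} a_l a_m (l+m)^p = \sum_{q=0}^p \binom{p}{q} M_q M_{p-q}.
\end{equation*}
For $p \le k+1$, each term with $q \le k$ and $p-q \le k$ vanishes unless $q = p-q = 1$. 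The only boundary terms are $q = k+1$ or $p-q = k+1$, and these force the other index to be $0$, where $M_0 = 0$. So the coefficient equals $2$ when $p=2$ and $0$ otherwise. Hence the double sum equals $\delta^2 g''(0) + O(\delta^{k+2})$.

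\textbf{Conclusion.} Dividing by $\delta^2$ gives $S(\delta) = \Delta^T\nabla^2F(\theta)\Delta + O(\delta^k)$. Multiplying by $(\Delta\Delta^T - I)$ yields the claim for every $k \ge 1$.
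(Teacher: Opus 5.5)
Your argument is correct and is essentially the paper's proof: both Taylor-expand $F(\theta+(l+m)\delta\Delta)$, factor the double sum over $(l,m)$ into products of single sums, and kill every coefficient except the quadratic one using \eqref{eq:ident1}--\eqref{eq:ident3}, which your moments ($M_0=0$, $M_1=1$, $M_p=0$ for $2\le p\le k$) package compactly through the binomial convolution. Your version is in fact slightly more complete: the $O(\delta^k)$ claim also requires the order-$(k+1)$ coefficient to vanish, which the paper's proof (written only for $3\le q\le k$) does not address, and your observation that the nonzero $M_{k+1}$ can only pair with $M_0=0$ settles this case.
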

\begin{proof} 
See Section \ref{proof:kth_order}. 
\end{proof}  

\subsubsection{Generalized Hessian Estimation with noisy measurements}
In the presence of noisy observations, the form of $(2k+1)$-measurements Hessian estimator is given as follows: 
\begin{align}
&
    \widetilde{\mathcal{H}}^k F(\theta,\xi)  = \frac{(\Delta \Delta^T - \mathit{I})}{\delta^2}  \sum_{l = 0}^k \sum_{m = 0}^{k}  \frac{(-1)^{-l-m} c_l^k c_m^k f(\theta + (l+m)\delta\Delta, \xi_l)} {l! m!}, 
    \label{eq:G2RDSA_unbalanced_noise}
\end{align}
where $\xi_l(n)$ is an i.i.d. set of random variables with mean zero. Here $f$ is the noise-corrupted sample of the performance objective and $\Delta$ is defined as before.  
\begin{remark}
In our proposed Hessian estimators, $(\Delta \Delta^T - \mathit{I})$ is used as a scaling matrix to obtain an unbiased Hessian estimator under expectation. This scaling matrix is particularly used when the perturbation parameter components are distributed according to a Gaussian distribution with zero mean. One can also consider different perturbation parameters to obtain different estimators. We now provide a few such examples below. 

\textit{RDSA-unif:} Our proposed method can be easily applied when each of the perturbation parameters is chosen from a uniform distribution over $[-\eta, \eta]$. In this case, the scaling matrix is the same as matrix $M$, defined in \cite{prashanth2016adaptive}.  

\textit{RDSA-asymp:} Our method can be used when the perturbation parameters are taken from an asymmetric Bernoulli distribution. In this case, the scaling matrix is the same as the matrix $M$ defined in \cite{prashanth2016adaptive}. 
%    For the bias lemma, we need to slightly modify the $\mathcal{D}^i$ operator. 
%	Firstly, it is convenient to place the perturbation vector in the numerator instead of the denominator. 
%	Further, we need to argue that for the bias lemma to hold for the diagonal elements of the Hessian, one needs to further massage the diagonal elements of the $\Delta \Delta^T$ matrix. This modification will be a function of the distribution of the perturbation vector as shown in \cite{prashanth2016adaptive}.    We use the same modifications in this section, where we are working with a single perturbation vector.  Consider a uniform and asymmetric Bernoulli. 
\end{remark}

\subsection{Hessian Estimate with Unequal Truncation} 
\label{sec:unequal_truncation}
%\todoi{Add description for special and general cases}
In this section, we consider unequal truncations of gradient operators to form the following Hessian truncated operator: $\mathcal{H}_{ij}^{k_1,k_2 } =\mathcal{D}_j^{k_1} \circ \mathcal{D}_i^{k_2}$, for any $k_1, k_2 \geq 1$ and $k_1 \neq k_2$. First, we demonstrate one special case (e.g. $k_1 = k, k_2 =1$) for ease of illustration. Then we consider the general case for any $k_1, k_2 > 1$. 
%\todoi{Pl fill the hand written case and try the general (k,n) (different order of truncation) case. } 

\textbf{Special Case: }
$k_1 =k$, $k_2 = 1.$

Two-measurement GRDSA is obtained as follows:
\begin{equation}
    \mathcal{D}_i^1F(\theta) = \Delta_i\left[\frac{F(\theta + \delta\Delta) - F(\theta)}{\delta}\right].
\end{equation} 
\newpage 
The $(k+3)$-measurement Hessian estimator is then obtained as follows: 
\begin{align}
%&
    \widehat{\mathcal{H}}_{ij}^k F(\theta) &=\mathcal{D}_j^k \circ \mathcal{D}_i^1 F(\theta) \nonumber\\
    & =  \Delta_i \left[\frac{\mathcal{D}_j^kF(\theta + \delta\Delta) - \mathcal{D}_j^kF(\theta )}{\delta}\right]\nonumber\\
%    &= \frac{9M_n}{4\eta^4\delta^2} \left[F(\theta + 2 \delta\Delta) +  F(\theta - 2\delta\Delta) - 2F(\theta)\right]
& = \Delta_j \Delta_i \Bigg[ \sum_{l = 0}^{k}\frac{(-1)^{1-l}c_l^k F(\theta + (l+1)\delta\Delta)}{\delta^2 l!} -  \sum_{l = 0}^{k}\frac{(-1)^{1-l}c_l^k F(\theta + l\delta\Delta)}{\delta^2 l!}\label{eq:un_special}\Bigg]. 
\end{align}
In a similar way as discussed in section \ref{sec:equal_truncation}, we have 
\begin{align}
    \mathcal{H}^k F(\theta) = 
%    & =  \Delta_j \left[\frac{\mathcal{D}_j^kF(\theta + \delta\Delta) - \mathcal{D}_j^kF(\theta )}{\delta}\right]\nonumber\\
%    &= \frac{9M_n}{4\eta^4\delta^2} \left[F(\theta + 2 \delta\Delta) +  F(\theta - 2\delta\Delta) - 2F(\theta)\right]
& (\Delta \Delta^T - \mathit{I}) \Bigg[ \sum_{l = 0}^{k}\frac{(-1)^{1-l}c_l^k F(\theta + (l+1)\delta\Delta)}{\delta^2 l!} -  \sum_{l = 0}^{k}\frac{(-1)^{1-l}c_l^k F(\theta + l\delta\Delta)}{\delta^2 l!}\label{eq:un_special}\Bigg]. 
\end{align}
\begin{lemma} 
\label{lemma: k1equalk2} 
Under assumptions \ref{ass:cont_diff}, we have
    \begin{equation*}
        \mathcal{H}^{k} F(\theta) =(\Delta \Delta^T - I)\left[\Delta^T \nabla^2 F(\theta)\Delta  \right]+ O(\delta).
    \end{equation*}
\end{lemma}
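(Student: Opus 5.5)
The plan is to write the scalar bracket in \eqref{eq:un_special} as a finite difference of the order-$k$ one-sided estimator, and then expand in Taylor series. Define the scalar quantity
\[
g_k(\theta) \triangleq \frac{1}{\delta}\sum_{l=0}^{k}\frac{(-1)^{1-l}c_l^k F(\theta + l\delta\Delta)}{l!},
\]
so that $\mathcal{D}_j^k F(\theta)=\Delta_j g_k(\theta)$. The bracket in \eqref{eq:un_special} is then $\big(g_k(\theta+\delta\Delta)-g_k(\theta)\big)/\delta$, and we have $\mathcal{H}^kF(\theta) = (\Delta\Delta^T - I)\,\big(g_k(\theta+\delta\Delta)-g_k(\theta)\big)/\delta$. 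It is therefore enough to show that this scalar equals $\Delta^T\nabla^2F(\theta)\Delta + O(\delta)$, with an $O(\delta)$ term that is polynomial in $\Delta$. Multiplying by $(\Delta\Delta^T-I)$ then gives the matrix form, with the remainder absorbed as $\mathbf{1}_{d\times d}O(\delta)$.

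\textbf{Step 1: expansion of $g_k$.} From the result of \cite{pachalyl2025generalized} used to build \eqref{eqn:grad_op_unbal_trun2}, or equivalently by the operator identity \eqref{gradient_operator_unbalanced} truncated at $k$ terms, we have
\[
g_k(\theta) = \Delta^T\nabla F(\theta) + R_k(\theta), \qquad R_k(\theta)=\delta^{k} \,\Delta^{\otimes (k+1)}\!\cdot\big(\text{combination of } \nabla^{k+1}F \text{ at intermediate points}\big).
\]
To check this directly, expand each $F(\theta+l\delta\Delta)$ to order $k+1$ with an integral (or Lagrange) remainder. The coefficient identities $\sum_l (-1)^{1-l}c_l^k l^p/l! = 0$ for $p=0$ and $p=2,\dots,k$, and $=1$ for $p=1$, are exactly the statement that $\sum_{j\le k}(-1)^{j+1}(e^{x}-1)^j/j = x + O(x^{k+1})$, i.e. the truncated series $\log(1+y)$ with $y=e^x-1$.

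\textbf{Step 2: the outer one-sided difference.} Apply Step 1 at $\theta$ and at $\theta+\delta\Delta$:
\[
\frac{g_k(\theta+\delta\Delta)-g_k(\theta)}{\delta} = \frac{\Delta^T\nabla F(\theta+\delta\Delta)-\Delta^T\nabla F(\theta)}{\delta} + \frac{R_k(\theta+\delta\Delta)-R_k(\theta)}{\delta}.
\]
A second-order Taylor expansion of $\nabla F$ shows that the first term is $\Delta^T\nabla^2F(\theta)\Delta + O(\delta)$; the $O(\delta)$ involves $\nabla^3F$, which is bounded by Assumption~\ref{ass:cont_diff} since $k+2\ge 3$. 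This $O(\delta)$ term is exactly the bias of the order-1 operator $\mathcal{D}^1_i$, and it is why the total bias is $O(\delta)$ regardless of $k$.

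\textbf{Step 3: the $R_k$ difference (main obstacle).} A naive bound gives $|R_k(\theta+\delta\Delta)-R_k(\theta)|/\delta = O(\delta^{k-1})$. For $k=1$ this is $O(1)$, which is not good enough. The fix is to avoid bounding $R_k$ crudely and instead expand it one order further. Writing $R_k(\theta) = \delta^k \rho_k(\theta) + O(\delta^{k+1})$ with $\rho_k$ depending smoothly on $\nabla^{k+1}F(\theta)$, we get
\[
R_k(\theta+\delta\Delta)-R_k(\theta)=\delta^k\big(\rho_k(\theta+\delta\Delta)-\rho_k(\theta)\big)+O(\delta^{k+1}) = O(\delta^{k+1}),
\]
using the bounded $(k+2)$-th derivatives $\mathbf{L}_{k+2}$ from Assumption~\ref{ass:cont_diff}. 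Dividing by $\delta$ gives $O(\delta^k)\subseteq O(\delta)$ for $k\ge1$. Equivalently, and more cleanly, expand the whole combination $\sum_l (-1)^{1-l}c_l^k\big[F(\theta+(l+1)\delta\Delta)-F(\theta+l\delta\Delta)\big]/(l!\,\delta^2)$ directly to order $k+2$. The resulting generating function is $(e^x-1)\big(x+O(x^{k+1})\big)/x^2$, whose expansion is $1 + x/2+\dots$. Its constant term yields $\Delta^T\nabla^2F\Delta$, and all higher terms are $O(\delta)$, with remainder controlled by $\mathbf{L}_{k+2}$.

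Combining Steps 2 and 3 and multiplying by $(\Delta\Delta^T-I)$ gives the claim. The delicate point throughout is ensuring that the remainder is uniformly controlled by the bounded derivatives in Assumption~\ref{ass:cont_diff}, so that the $O(\delta)$ constants depend only on polynomial moments of $\Delta$.
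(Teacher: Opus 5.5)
Your proof is correct, but it is organized differently from the paper's. The paper Taylor-expands every term in the bracket of \eqref{eq:un_special}. Using \eqref{eq:ident1}--\eqref{eq:ident3}, it computes the coefficients of $F(\theta)$, $\nabla F(\theta)$, $\nabla^2F(\theta)$ and $\nabla^qF(\theta)$, $q>2$, one at a time, obtaining $0$, $0$, $1$ and a nonzero number.

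You instead write the bracket as the outer order-$1$ difference $\bigl(g_k(\theta+\delta\Delta)-g_k(\theta)\bigr)/\delta$ of the order-$k$ gradient estimator and reuse $g_k=\Delta^T\nabla F+R_k$. This isolates the one delicate piece, the difference of remainders. You correctly note that the crude $O(\delta^{k-1})$ bound fails at $k=1$, and you fix it by expanding $R_k$ one order further.

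What your route buys is a structural explanation. The $O(\delta)$ bias is exactly the bias of $\mathcal{D}^1$ acting on $\Delta^T\nabla F$, which is why no inner truncation order $k$ can improve it. The price is smoothness: your refinement of $R_k$ genuinely uses bounded $(k+2)$-th derivatives. By contrast, a direct expansion of the whole bracket (your generating-function variant, or the paper's computation) needs only a third-order remainder. The reason is that $G(x)=(e^x-1)\bigl(x+O(x^{k+1})\bigr)$ has coefficients $0,0,1$ for $x^0,x^1,x^2$ for every $k\ge 1$.

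The generating function also pins down the higher coefficients. The paper's closing value $1/(q-1)!$ matches for $3\le q\le k+1$ but not in general for $q\ge k+2$. For instance, for $k=1$ one has $(e^x-1)^2=x^2+x^3+\cdots$, so the $x^3$ coefficient is $1$, not $1/2$; your ``$1+x/2+\cdots$'' should likewise read $1+x+\cdots$ in that case. In every case this coefficient is nonzero, so the $O(\delta)$ rate is sharp.

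One small slip: in Step 2, ``$\nabla^3F$ is bounded by Assumption~\ref{ass:cont_diff} since $k+2\ge 3$'' does not follow for $k\ge 2$. Bounded $(k+2)$-th derivatives do not bound $\nabla^3F$. However, continuity of $\nabla^3F$ on the segment from $\theta$ to $\theta+\delta\Delta$ suffices for the pointwise $O(\delta)$ claim.
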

\begin{proof}
    See Section \ref{proof: k1equqlk2}. 
\end{proof}
The above result shows that the bias of the Hessian estimator defined in \ref{eq:un_special} is $O(\delta)$. Now we discuss the general case below. 

\textbf{General case: }
$k_1, k_2 \geq 1$, $k_1 \neq k_2.$
%\todoi{Add general term} 
%\subsection{Hessian Estimate with Unequal Truncation}
%In this section, we consider unequal truncations of gradient operators to form the following Hessian truncated operator: $\mathcal{H}^{k_1,k_2 } =\mathcal{D}^{k_2}\mathcal{D}^{k_1}$, for any $k_1, k_2 \geq 1$, where $(k_1 \neq k_2)$. The operator can be simplified as follows. 
\begin{align}
%   & 
   \mathcal{H}_{ij}^{k_1,k_2} F(\theta) &= \mathcal{D}_j^{k_2} \circ \mathcal{D}_i^{k_1} F(\theta)  \nonumber\\
	&\nonumber\\
	&=\frac{\Delta_i}{\delta^2  }  \left[\sum_{l = 0}^{k_2}\frac{(-1)^{1-l}c_l^k \mathcal{D}_j^{k_1} F(\theta + l\delta\Delta)}{l!}\right] \nonumber\\
	&= \frac{\Delta_i \Delta_j}{\delta^2  } \left[\sum_{l = 0}^{k_2} \sum_{m = 0}^{k_1}  \frac{(-1)^{2-l-m} c_l^{k_1} c_m^{k_2}F(\theta + (l+m)\delta\Delta)} {l! m!}\right]\nonumber\\
	&= \frac{\Delta_i \Delta_j}{\delta^2  } \Bigg[c_0^{k_1} c_0^{k_2} F(\theta)   + c_0^{k_2} \sum_{m = 1}^{k_1} \frac{(-1)^{2-m} {k_1 \choose m} F(\theta + m \delta\Delta)}{m}    + c_0^{k_1} \sum_{l = 1}^{k_2} \frac{(-1)^{2-l} {k_2 \choose l} F(\theta + l \delta\Delta)}{l}
	\nonumber\\
	& \qquad \qquad \qquad + \sum_{l = 1}^{k_1} \sum_{m = 1}^{k_2} \frac{(-1)^{2-l-m}{k_2 \choose l} {k_1 \choose m} F(\theta + (l+m)\delta\Delta)}{lm}\Bigg].  \label{eq:Hessiank1k2}
	%\label{eq:G2RDSA_unbalanced}
\end{align}
As discussed in section \ref{sec:equal_truncation}, we have 
\begin{align}
   & \mathcal{H}^{k_1,k_2} F(\theta) = \frac{(\Delta \Delta - I)}{\delta^2  } \Bigg[c_0^{k_1} c_0^{k_2} F(\theta) + c_0^{k_2} \sum_{m = 1}^{k_1} \frac{(-1)^{2-m} {k_1 \choose m} F(\theta + m \delta\Delta)}{m} \nonumber  \\
	& \qquad   + c_0^{k_1} \sum_{l = 1}^{k_2} \frac{(-1)^{2-l} {k_2 \choose l} F(\theta + l \delta\Delta)}{l}
	 + \sum_{l = 1}^{k_1} \sum_{m = 1}^{k_2} \frac{(-1)^{2-l-m}{k_2 \choose l} {k_1 \choose m} F(\theta + (l+m)\delta\Delta)}{lm}\Bigg].  \label{eq:Hessiank1k2}
	%\label{eq:G2RDSA_unbalanced}
\end{align}
%%& \qquad + c_0^{k_2} \sum_{m = 1}^{k_1} \frac{(-1)^{2-m} {k_1 \choose m} F(\theta + m \delta\Delta)}{m}  \nonumber \\
%We now analyze this Hessian operator. 
\begin{lemma} 
Under assumption \ref{ass:cont_diff}, we have 
%	{\normalsize
\begin{equation*}
        \mathcal{H}^{k_1,k_2} F(\theta) =(\Delta \Delta^T - I) \left[\Delta^T \nabla^2 F(\theta)\Delta \right] + O(\delta^k).
    \end{equation*} \newpage 
    In particular,
\begin{align}
	& \mathcal{H}^{k_1,k_2} F(\theta)\nonumber\\
    &=(\Delta \Delta^T - I)  \Bigg[ \Delta^T \nabla^2 F(\theta)\Delta  + \mathcal{O}\left(\delta^{k'}\right) +    \sum_{i=k}^{k'-1}  \Big<\nabla^{i+2} F(\theta), \underbrace{\Delta\otimes\dots  \Delta\otimes\Delta}_{i+2 \, times} \Big>  \frac{\delta^i \left( \sum_{m=1}^{k} (-1)^{1-m} {k \choose m} m^i\right)}  {(i+1)!}\Bigg], \label{eq:HessianUneqTrunc}
    \end{align}
	where $k = min(k_1,k_2)$, $k'= max(k_1,k_2)$, $k_1,k_2 \geq 1$, $k_1\neq k_2$. 
    \label{lemma:k1k2}
%	}
%for any 
\end{lemma}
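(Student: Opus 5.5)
The plan is to treat the scalar part of \eqref{eq:Hessiank1k2} as a composition of two one-dimensional finite-difference operators along the fixed direction $\Delta$, and to expand everything in powers of the directional derivative. Fix $\Delta$ and set $g(t)=F(\theta+t\Delta)$. Write $s=\delta\,\Delta^T\nabla$ for the directional derivative operator scaled by $\delta$, so that $g^{(n)}(0)\delta^n=\big<\nabla^{n}F(\theta),\Delta^{\otimes n}\big>\delta^n$ is the action of $s^n$ on $F$ at $\theta$. The shift satisfies $\tau_{\delta\Delta}=e^{s}$ formally.

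Stripping the matrix factor $(\Delta\Delta^T-I)$ from \eqref{eq:Hessiank1k2}, the bracket times $1/\delta^2$ is then $\delta^{-2}L_{k_1}(s)L_{k_2}(s)F(\theta)$. Here
\[
L_k(s)=\sum_{j=1}^{k}\frac{(-1)^{j+1}(e^{s}-1)^j}{j}=\sum_{m=0}^{k}\frac{(-1)^{1-m}c_m^k}{m!}e^{ms},
\]
and the second equality is just the rearrangement \eqref{eqn:grad_op_unbal_trun2}. Since the two operators commute, the product form is legitimate, which is exactly how the double sum in \eqref{eq:Hessiank1k2} arises.

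The key step is the power-series expansion of $L_k$. Let $x=e^s-1=s+O(s^2)$. Then $\log(1+x)=s$ exactly, so
\[
L_k(s)=s-\sum_{j>k}\frac{(-1)^{j+1}x^j}{j}=s+O(s^{k+1}).
\]
To get explicit coefficients I will instead expand the second form termwise. The coefficient of $s^{i+1}$ is
\[
\sum_{m}\frac{(-1)^{1-m}c_m^k m^{i+1}}{m!\,(i+1)!}.
\]
Using $c_m^k/m!=\binom{k}{m}/m$ for $m\ge1$, this equals $\sum_{m=1}^k(-1)^{1-m}\binom{k}{m}m^{i}/(i+1)!$. The constant term vanishes because $L_k(0)=0$. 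The previous argument shows these coefficients equal $1$ for $i=0$ and vanish for $1\le i\le k-1$, which is consistent with the identity $\sum_m(-1)^m\binom{k}{m}m^i=0$ for $1\le i<k$.

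Now multiply the two series. Let $k=\min(k_1,k_2)$ and $k'=\max(k_1,k_2)$. The factor $L_{k'}=s+O(s^{k'+1})$ contributes only its leading $s$ up to order $s^{k'+1}$. Hence
\[
L_{k}L_{k'}=s^2+\sum_{i=k}^{k'-1}\frac{\sum_{m=1}^{k}(-1)^{1-m}\binom{k}{m}m^i}{(i+1)!}\,s^{i+2}+O(s^{k'+2}),
\]
where the last term collects both cross terms and the tail. Applying this to $F$ at $\theta$ and dividing by $\delta^2$ gives $\Delta^T\nabla^2F(\theta)\Delta$, plus the displayed sum with $\delta^i\big<\nabla^{i+2}F(\theta),\Delta^{\otimes(i+2)}\big>$, plus $O(\delta^{k'})$. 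This is \eqref{eq:HessianUneqTrunc}. Since every term of that sum carries $\delta^{i}$ with $i\ge k$, the coarse statement with $O(\delta^k)$ follows immediately.

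The main obstacle is rigor. The operator identity $\tau=e^s$ presumes analyticity, whereas Assumption~\ref{ass:cont_diff} only gives finitely many bounded derivatives. I would handle this by replacing the formal argument with an honest Taylor expansion with remainder. Expand each $F(\theta+(l+m)\delta\Delta)$ to order $k'+1$ with a Lagrange remainder of order $\delta^{k'+2}$, bounded via the derivative bound $\mathbf{L}_{k'+2}$ and moments of Gaussian $\Delta$. This requires reading the assumption with $k$ replaced by $k'$ so that derivatives up to order $k'+2$ exist. Because the finite-difference weights are polynomial identities in $l,m$, the coefficient computation above is purely combinatorial: it is the coefficient comparison of the product of the two polynomial generating sums. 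It therefore carries over verbatim to the truncated Taylor polynomial, and only the remainder needs bounding. Finally, the $(\Delta\Delta^T-I)$ factor multiplies everything entrywise, giving the $\mathbf{1}_{d\times d}$-type $O(\cdot)$ terms.
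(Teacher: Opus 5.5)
Your proposal is correct, and it takes a genuinely different and cleaner route than the paper.

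\textbf{The paper's route.} It expands each $F(\theta+(l+m)\delta\Delta)$ by Taylor's theorem. It then checks the coefficient of $\langle\nabla^qF(\theta),\Delta^{\otimes q}\rangle\delta^{q-2}$ order by order, using the identities \eqref{eq:ident1}--\eqref{eq:ident3}. This requires separate cases $q\le 2$, $3\le q\le k$, $q=k+1$ and $k+2\le q\le k'+1$, and the last case is further split into $k=k_1$ and $k=k_2$. In that last case it shows three things: the $l^q$ and $m^q$ parts of $(l+m)^q$ cancel against the two single sums, the middle binomial terms vanish, and only the ${q\choose 1}\,l\,m^{q-1}$ (or $q\,l^{q-1}m$) term survives.

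\textbf{Your route.} You recognize all of these weighted sums as Taylor coefficients of the exponential polynomial $L_{k_1}(s)L_{k_2}(s)$. You then get $L_k(s)=s+O(s^{k+1})$ in one line from $\log\bigl(1+(e^s-1)\bigr)=s$. The paper's surviving term becomes visibly ``leading $s$ of $L_{k'}$ times the $s^{q-1}$ coefficient of $L_k$,'' and all the cancellations are automatic.

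\textbf{What each buys.} Your approach gives:
\begin{itemize}
\item a uniform treatment of every order, with no case split;
\item no separate appeal to the three identities, which are just the statement $L_k(s)=s+O(s^{k+1})$;
\item an honest treatment of the remainder. The finite-difference weights enter only through the entire function $L_{k_1}(s)L_{k_2}(s)$, so the coefficient comparison is rigorous for the truncated Taylor polynomial without any analyticity of $F$.
\end{itemize}
You also correctly flag that the $O(\delta^{k'})$ form needs bounded derivatives up to order $k'+2$, which the paper leaves implicit. The paper's argument, in exchange, uses only finite sums and displays each combinatorial identity explicitly.

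\textbf{One small refinement.} For the coarse $O(\delta^k)$ claim you need not pass through the refined expansion. Stop at $L_kL_{k'}=s^2+O(s^{k+2})$ and Taylor-expand only to order $k+1$. This yields the claim under Assumption~\ref{ass:cont_diff} at level $k=\min(k_1,k_2)$ rather than $k'$.
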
  
 
\begin{proof}
    See Section \ref{proof:k1k2}. 
\end{proof} 

\begin{remark}
	The term $\sum_{m=1}^{k} (-1)^{1-m} {k \choose m} m^i$ is non-zero because $i\geq k$ and hence Identity $\sum_{j = 0} ^k (-1)^{k-j} {k \choose j} j^q = 0 \textrm{ for any }0 < q < k$ cannot be used.  For instance, if $k=1$, this term is $1$  $\forall i$. If $k=2$, this term can be shown to be $2 - 2^{i}$, $\forall i \geq 2$. 
\end{remark}
\begin{remark}
	The above result shows that unequal truncation $(k_1,k_2)$ does not lead to any bias reduction advantage. The order of bias is the same as that obtained by keeping the order of truncation the same at $min(k_1,k_2)$ for both the $D$ operators of the Hessian estimate. Furthermore, recall that an equal truncation-based estimator requires $2k+1$ measurements. The unequal truncation operator would need $k_1+k_2+1$ measurements. This is easy to see from the double summation term in \eqref{eq:Hessiank1k2}. Clearly $k_1+k_2+1 \geq 2*min(k_1,k_2)+1$, which is the number of measurements needed by the equal truncation scheme.  Hence, unequal truncation $(k_1,k_2)$ also leads to relatively more function measurements. 
	Overall, there seems to be neither any statistical nor computational advantages in using unequal truncation-based Hessian estimates.  
\end{remark}

\subsection{Hessian estimation bounds}
In this section, we first provide the bias and variance bounds for our proposed Hessian estimators under assumptions that are similar to those made for simultaneous perturbation-based Hessian estimation, cf. \cite{prashanth2025gradient}. Subsequently,  we provide the asymptotic convergence guarantee for a stochastic Newton method with our proposed Hessian estimators. 

For the sake of analysis, we make the following additional assumption.
%\todoi{\st{Add assumptions}}
%\begin{assumption}
%\label{ass:cont_diff}
%    The function $f$ is $(k+1)$-times continuously differentiable with $\mathbf{L}_{k+2}(\theta) \stackrel{\triangle}{=} \nabla^{k+2}_{i_1, i_2, \ldots, i_{k+2}}f(\theta) < \infty$, $i_1, i_2, \ldots, i_{k+2} \in \{1, 2, \ldots,d\}$, $k\geq 1$ and for any $\theta \in \mathbb{R}^d$. 
%\end{assumption} 
% \begin{assumption} 
% \label{ass: perturbation_parameter}
%     The perturbation parameters $\{\Delta_n^i \}$ are i.i.d. with each $\Delta^i_n$ being independent of $\mathcal{F}_n$, $i=1,\ldots,d$, $n\geq 0$.  
% \end{assumption}

\begin{assumption} 
\label{ass:noise_bound}
    For all $n\geq 1$ and $l =0, 1,\ldots, 2k+1$, there exist $\alpha_1, \alpha_2 > 0$ such that $\E|\xi_n^l|< \alpha_1$ and $\E|f(\theta(n)+l\delta(n)\Delta(n))|^2<\alpha_2$. 
\end{assumption} 

 % \begin{assumption}
 % \label{ass:smoothness} 
 %     Suppose $f$ has $L$-Lipschitz continuous gradient, i.e.,
 %     \begin{equation}
 %         \|f(\theta_1, \xi) - f(\theta_2, \xi)\| \leq L \|\theta_1 - \theta_2\|. 
 %     \end{equation}
 % \end{assumption}

The result below provides the bias and variance for our proposed Hessian estimators in the equal truncation scenario, i.e., $k_1=k_2=k$. Let $\mathcal{F}_n = \sigma(\theta(j), j\leq n, \Delta
    (j),j<n,\xi_0(n),\ldots, \xi_{k_1}(n)), n\geq 1$ be the sigma algebra. 
\begin{lemma} [Bias Lemma]
\label{lemma:bias_unbalanced}
Under assumptions \ref{ass:cont_diff} - \ref{ass:noise_bound}, for all $i,j = 1, \ldots d$, the Hessian estimate with $O(\delta^k)$ bias defined in \eqref{eq:G2RDSA_unbalanced_noise} almost surely (a.s.) satisfies
    \begin{align*}
    &
    \Big | \E\left[\widetilde{\mathcal{H}}_{ij}^kF(\theta,\xi)|\mathcal{F}_n\right] - \nabla_{ij}^2 F(\theta) \Big |  = O(\delta^k), \; \text{and}\\
    & \E\Big\|\widetilde{\mathcal{H}}^k F(\theta,\xi) - \E\left[\widetilde{\mathcal{H}}^k F(\theta, \xi)|\mathcal{F}_n\right]\Big\|^2  = O\Big(\frac{1}{\delta^4}\Big).  
    \end{align*}
%where $C_1$ and $C_2$ are some constants. 
\end{lemma}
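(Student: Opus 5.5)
The plan is to split the noisy estimator \eqref{eq:G2RDSA_unbalanced_noise} into a noise-free part and a noise part. Write $f(\theta+(l+m)\delta\Delta,\xi_l)=F(\theta+(l+m)\delta\Delta)+\xi_l$. This gives
\[
\widetilde{\mathcal{H}}^kF(\theta,\xi)=\mathcal{H}^kF(\theta)+\frac{(\Delta\Delta^T-I)}{\delta^2}\sum_{l=0}^k\sum_{m=0}^k\frac{(-1)^{-l-m}c_l^kc_m^k\,\xi_l}{l!\,m!}.
\]
Throughout I take $\Delta=\Delta(n)$ to be independent of $\mathcal{F}_n$ and of the noise, with the noise conditionally mean zero given $\mathcal{F}_n$ and $\Delta$.

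\textbf{Bias.} I would handle the bias in three steps.
\begin{enumerate}
\item The noise term has zero conditional mean. Conditioning first on $\Delta$ and $\mathcal{F}_n$ kills $\E[\xi_l\mid\cdot]$, and the coefficients are deterministic and finite.
\item Apply Lemma \ref{lemma:kth_order} to the noise-free part, so it equals $(\Delta\Delta^T-I)[\Delta^T\nabla^2F(\theta)\Delta]$ plus the remainder. I would keep the remainder in Taylor-remainder form: a sum of terms $\delta^{i}\langle\nabla^{i+2}F,\Delta^{\otimes(i+2)}\rangle$ times $(\Delta\Delta^T-I)$, for $i\ge k$, with the last one in integral/mean-value form. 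Assumption \ref{ass:cont_diff} bounds the derivatives. Gaussian moments of every order are finite, so the conditional expectation of the remainder is $O(\delta^k)$ entrywise.
\item Compute the main term with Gaussian moment identities (Isserlis). We have $\E[\Delta_i\Delta_j\Delta_p\Delta_q]=\delta_{ij}\delta_{pq}+\delta_{ip}\delta_{jq}+\delta_{iq}\delta_{jp}$. Hence $\E[\Delta\Delta^T(\Delta^TA\Delta)]=2A+\mathrm{tr}(A)I$ for symmetric $A$, and $\E[\Delta^TA\Delta]=\mathrm{tr}(A)$. Subtracting gives $\E[(\Delta\Delta^T-I)\Delta^TA\Delta]=2A$.
\end{enumerate}

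The last step reveals a factor of $2$. Either the normalization in \eqref{eq:G2RDSA_unbalanced} implicitly carries a $\tfrac12$ (coming from the $2!$ in the Taylor term, as in the $k=1,2,3$ illustrations), or a factor $1/2$ must be absorbed. I would check the $k=1$ case explicitly. There the bracket is $\delta^2\Delta^TA\Delta+O(\delta^3)$, so the estimator targets $2A$. I would then state the lemma with the correctly normalized estimator, i.e.\ the scaling $\tfrac12(\Delta\Delta^T-I)$, matching 2RDSA in \cite{prashanth2016adaptive}. Pinning down this constant and justifying the exchange of expectation and Taylor remainder is the main obstacle. I would address the exchange with the integral form of the remainder and dominated convergence, using polynomial growth in $\Delta$ against Gaussian tails. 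This requires reading Assumption \ref{ass:cont_diff} as a uniform bound on the $(k+2)$-th derivatives.

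\textbf{Variance.} I would bound the second moment rather than the variance, using $\E\|X-\E[X\mid\mathcal{F}_n]\|^2\le\E\|X\|^2$. Then $\|\widetilde{\mathcal{H}}^k\|^2\le\delta^{-4}\|\Delta\Delta^T-I\|_F^2\,C_k\sum_{l,m}(f(\theta+(l+m)\delta\Delta,\xi_l))^2$, by Cauchy--Schwarz on the finite sum with coefficient constant $C_k=\sum_{l,m}(c_l^kc_m^k/(l!m!))^2$. Another Cauchy--Schwarz (or Hölder) step separates $\E\|\Delta\Delta^T-I\|_F^4$, which is finite and depends only on $d$, from the function-value moments. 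Those moments are bounded by Assumption \ref{ass:noise_bound}. If only second moments are assumed there, I would instead condition on $\Delta$, or use independence of $\Delta$ from the measurement noise together with a bound on $\E[f^2\mid\Delta]$. The result is $O(\delta^{-4})$. This step is routine once the moment issue is settled.
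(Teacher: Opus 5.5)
Your route is the paper's: it applies Lemma~\ref{lemma:kth_order}, takes expectations over $\Delta$ (deferring the moment computation to Lemma~6.3 of \cite{prashanth2025gradient}), and bounds the variance by $\E\|\widetilde{\mathcal{H}}^k F(\theta,\xi)\|^2$ via Assumption~\ref{ass:noise_bound}, so your write-up is a more explicit version of the same argument. Your factor-of-$2$ catch is correct and the paper's proof misses it: the paper's own remark $\E[\Delta\Delta^T(\Delta^T\nabla^2F(\theta)\Delta)]=2\nabla^2F(\theta)+\mathrm{tr}(\nabla^2F(\theta))I_d$ gives $\E[(\Delta\Delta^T-I)\,\Delta^T\nabla^2F(\theta)\Delta]=2\nabla^2F(\theta)$, so the bias claim needs the scaling $\tfrac12(\Delta\Delta^T-I)$ you propose, and the paper's ``easy to see'' variance step likewise needs the extra moment control you flag (on $\E[f^2\mid\Delta]$, or fourth moments), because Assumption~\ref{ass:noise_bound} alone does not decouple $\|\Delta\Delta^T-I\|^2$ from $f^2$.
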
 
\begin{proof}
 See Section \ref{proof:bias_unbalanced}. 
\end{proof} 
%\textbf{Note:} 
The above result shows that the bias of our proposed Hessian estimators is $O(\delta^k)$ when $(2k+1)$ function measurements are used, for $k\geq 1$. This is clearly superior compared to 2SPSA \cite{spall2000adaptive} and 2RDSA \cite{prashanth2016adaptive}, which only provide $O(\delta^2)$ bias with four and three function measurements, respectively.  
 
% \begin{lemma} [Variance bound]
% \label{lemma:var}
%     Let the $(2k+1)$ measurements Hessian estimator be defined in \eqref{eq:G2RDSA_unbalanced}. Then 
% Under assumptions \ref{ass:cont_diff} - \ref{ass:smoothness}, we have
%     \begin{align}%Under
%     &
%         \E \left[ \|\widetilde{\mathcal{H}}^k F (\theta, \xi, \Delta)\|^4_{F}\right] \nonumber \\
%         & \leq \sqrt{2}(d+16)^4 \nonumber \\
%         & \qquad \times \left( \frac{\mathbf{L}_{k+2}^4 \delta^{4k} (d+ 8k + 16)^{4k +8}}{(k+2)!)^4} + L^4 (d+16)^4 \right). \nonumber %(d+16)^{2k} \left( \frac{L_4^4 \delta^{12} (d+16)^6}{9} + L^4 \right) \nonumber
%     \end{align}
% Moreover, we have 
%      \begin{align}
%      &
%         \E \left[ \|\widetilde{\mathcal{H}}^k F (\theta, \xi, \Delta)\|^2_{F}\right] \nonumber \\
%         & \leq 2^{\frac{1}{4}}(d+16)^2 \nonumber\\
%         & \qquad \times \left( \frac{\mathbf{L}_{k+2}^2 \delta^{2k} (d+8k+16)^{2k+4}}{(k+2))!^2} + L^2 (d+16)^2 \right). \nonumber%(d+16)^4 \left( \frac{L_4^2 \delta^{6} (d+16)^3}{3} + L^2 \right) \nonumber
%     \end{align} 
% \end{lemma}

% \begin{proof} 
% See Section \ref{proof:var}. 
% \end{proof}
%The above result provides a variance bound for our proposed Hessian estimators for the equal truncation scenario. This bound is better than 2SF, calculated in [] under the assumption \ref{ass:smoothness}. Without the smoothness assumption \ref{ass:smoothness}, we obtain $O(\frac{1}{\delta^4})$, which is same as the bound provided in 2SPSA \cite{spall2000adaptive}, 2RDSA \cite{prashanth2016adaptive}. 
\begin{remark}
    For the unequal truncation scenario, one can obtain the bias and variance bound in a similar fashion using Lemma \ref{lemma:k1k2}. In this case, one will obtain a bias corresponding to the smaller truncation parameter. 
\end{remark}

\section{Zeroth-order stochastic Newton algorithm} 
\label{sec:asymptotic}

%\todoi{Put an algorihtm block here for two timescale}
%In this section, we provide the asymptotic convergence guarantee for the second-order Newton method with our proposed Hessian estimators.  
%\label{sec:Gen_Hessian_asymptotic} 
%\todoi{Need to remove and add Nestorov }
We consider the following update rule, similar to \cite{prashanth2025gradient}, except that we use generalized Hessian estimators that we propose here along with generalized gradient estimators.
\begin{align}
&
    \theta (n + 1) = \Gamma \Bigg(\theta (n) - a(n)\Theta \Big(\bar{\mathcal{H}}_n^k\Big)^{-1}\widehat{\nabla} f(\theta (n))\Bigg), \label{eq:update_rule}\\
    & \bar{\mathcal{H}}_n^k = \bar{\mathcal{H}}_{n-1}^k + b(n) \Big(\mathcal{H}_n^k - \bar{\mathcal{H}}_{n-1}^k\Big), \nonumber
\end{align}
where $\mathcal{H}_n^k$ is a Hessian estimate with bias $O(\delta^k)$ defined in \eqref{eq:G2RDSA_unbalanced}, for any $k\geq 1$ and $\widehat{\nabla} f(\theta (n))$ corresponds to any gradient estimator, proposed in \cite{pachalyl2025generalized}. Each generalized gradient estimate requires $(k+1)$-function measurements while a generalized Hessian estimate requires $(2k+1)$-function measurements. These function measurements are independent and can be reused. This means that only we need $(2k+1)$-function measurements to form both gradient and Hessian estimates. Here $\Gamma$ is the projection operator over a convex and compact set, and $\Theta$ corresponds to a matrix operator that projects any $d\times d$ matrix to the space of positive definite and symmetric matrices. 

For asymptotic convergence, we require the following assumptions in addition to \ref{ass:cont_diff} - \ref{ass:noise_bound}. 
\begin{assumption}
\label{ass: step_size}
    The step-size and perturbation sequences $a(n), b(n)$ and $\delta(n)$ satisfy the following conditions: 
    \begin{align}
    &
        \sum_{n = 1}^{\infty} a(n) = \sum_{n = 1}^{\infty} b(n) = \infty; \;  \lim_{n \rightarrow \infty}\Bigg(\frac{a(n)}{b(n)}\Bigg) = 0,\\
        &\sum_{n = 1}^{\infty}\Bigg(\frac{a(n)}{\delta(n)}\Bigg)^2 < \infty;\;\; \sum_{n = 1}^{\infty}\Bigg(\frac{b(n)}{\delta(n)^2}\Bigg)^2 < \infty. 
    \end{align}
\end{assumption}
\begin{assumption}
\label{ass:matrix_operator}
    Let $\{A_n\}$ and $\{B_n\}$ be any two $d \times d$ matrix sequences. If $\lim_{n\rightarrow \infty} \|A_n - B_n\| = 0$,  then $\lim_{n\rightarrow \infty} \|\Theta(A_n) - \Theta(B_n)\| = 0$. 
\end{assumption} 
\begin{assumption}
\label{ass: matrix_operator_inv}
    Let $\{C_n\}$ be a sequence of $d \times d$ matrices with $\sup_{n}\|C_n\|< \infty$. Then we have 
    \begin{align}
        \sup_{n}\|\Theta(C_n)\|< \infty; \;\;\; \sup_{n}\|(\Theta(C_n))^{-1}\|< \infty. 
    \end{align}
\end{assumption} 
\begin{assumption} 
\label{ass:stability}
    $\sup_n \|\theta(n)\|<\infty$ w.p. $1$.  
\end{assumption} 
\begin{theorem}[Asymptotic convergence] 
\label{theorem:strong_conv} 
    Under the assumptions \ref{ass:cont_diff} - \ref{ass:stability}, The iterates $\theta (n)$ according to \eqref{eq:update_rule} converge to the set $\{\theta^{'}: \nabla f(\theta^{'}) = 0\}$, as $n\rightarrow \infty$. 
    \end{theorem}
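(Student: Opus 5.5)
The plan is the standard two-timescale stochastic approximation argument (Borkar's ODE method), as in \cite{prashanth2016adaptive} and \cite{prashanth2025gradient}. By Assumption \ref{ass: step_size}, $a(n)/b(n)\to 0$, so the Hessian recursion for $\bar{\mathcal{H}}_n^k$ is the faster timescale. On that timescale $\theta(n)$ is quasi-static. The slower recursion for $\theta(n)$ then sees $\bar{\mathcal{H}}_n^k$ as essentially equilibrated at $\nabla^2 f(\theta(n))$.

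\textbf{Faster timescale.} Write
\[
\mathcal{H}_n^k = \nabla^2 f(\theta(n)) + \beta_n + \psi_n,
\]
where $\beta_n = \E[\mathcal{H}_n^k\mid\mathcal{F}_n] - \nabla^2 f(\theta(n))$ and $\psi_n = \mathcal{H}_n^k - \E[\mathcal{H}_n^k\mid\mathcal{F}_n]$.
\begin{itemize}
\item By Lemma \ref{lemma:bias_unbalanced} (which uses Lemma \ref{lemma:kth_order} and $\E[(\Delta\Delta^T - I)\Delta^T A\Delta] = A$ for Gaussian $\Delta$), $\|\beta_n\| = O(\delta(n)^k)\to 0$.
\item The martingale $M_n = \sum_{m\le n} b(m)\psi_m$ has $\sum_m b(m)^2\E\|\psi_m\|^2 \le C\sum_m (b(m)/\delta(m)^2)^2 < \infty$, again by Lemma \ref{lemma:bias_unbalanced} and Assumption \ref{ass: step_size}. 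Hence $M_n$ converges a.s. by the martingale convergence theorem.
\item By Assumption \ref{ass:stability} and continuity of $\nabla^2 f$ (Assumption \ref{ass:cont_diff}), the iterates $\bar{\mathcal{H}}_n^k$ stay bounded. This follows from the averaging form: they are convex combinations plus a convergent noise term.
\item Since $a(n)=o(b(n))$, the $\theta$-increments vanish on the $b$-timescale. The Hirsch/Borkar lemma then applies to the ODE $\dot H = \nabla^2 f(\theta) - H$ with $\theta$ frozen, which has the globally asymptotically stable equilibrium $\nabla^2 f(\theta)$.
\item This gives $\|\bar{\mathcal{H}}_n^k - \nabla^2 f(\theta(n))\|\to 0$ a.s.
\end{itemize}

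\textbf{Slower timescale.} Assumption \ref{ass:matrix_operator} transfers the Hessian limit to $\|\Theta(\bar{\mathcal{H}}_n^k) - \Theta(\nabla^2 f(\theta(n)))\|\to 0$. Assumption \ref{ass: matrix_operator_inv} keeps the inverses uniformly bounded, so
\[
\|\Theta(\bar{\mathcal{H}}_n^k)^{-1} - \Theta(\nabla^2 f(\theta(n)))^{-1}\|\to 0,
\]
using $A^{-1}-B^{-1} = A^{-1}(B-A)B^{-1}$.

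Decompose the gradient estimate from \cite{pachalyl2025generalized} into three parts: $\nabla f(\theta(n))$, a bias term of size $O(\delta(n)^k)$, and a martingale difference with conditional second moment $O(1/\delta(n)^2)$. The noise sum $\sum a(n)\Theta(\cdot)^{-1}\times(\text{noise})$ then converges, because of the bounded inverse and $\sum(a(n)/\delta(n))^2<\infty$. The $\theta$-recursion is therefore a noisy discretization, with asymptotically negligible errors, of the projected ODE
\[
\dot\theta = \bar\Gamma\big(-\Theta(\nabla^2 f(\theta))^{-1}\nabla f(\theta)\big).
\]
Kushner--Clark (or Borkar's projected-ODE theorem) gives convergence to an internally chain transitive set of this ODE.

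\textbf{Lyapunov step.} Take $V=f$. Then $\dot V = -\nabla f^T \Theta(\nabla^2 f)^{-1}\nabla f \le 0$, since $\Theta$ outputs symmetric positive definite matrices. Equality holds only where $\nabla f=0$, ignoring boundary effects of $\Gamma$. LaSalle's principle then yields convergence to $\{\nabla f = 0\}$.

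The main obstacle is the projection $\Gamma$. Boundary stationary points of the projected ODE need not be zeros of $\nabla f$. I would handle this as the cited references implicitly do: assume the constraint set contains the relevant stationary points in its interior, or, using Assumption \ref{ass:stability}, treat $\Gamma$ as eventually inactive. A secondary technical point is verifying uniform boundedness of $\bar{\mathcal{H}}_n^k$ before invoking Assumption \ref{ass: matrix_operator_inv}. I would handle this via the convergent martingale argument above rather than a separate stability proof.
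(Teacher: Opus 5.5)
Your proposal is correct and is essentially the paper's own argument. The paper simply invokes Theorem 6.7 of \cite{prashanth2025gradient}, which packages exactly this two-timescale ODE analysis (fast Hessian averaging tracking $\nabla^2 f(\theta(n))$, slow preconditioned projected-gradient ODE with Lyapunov function $f$), and the only estimator-specific hypothesis it checks (A.6.11, the Hessian bias/variance condition) comes from Lemma \ref{lemma:bias_unbalanced}, exactly where you use it.

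Your unpacking also exposes two points the citation hides. First, your bias step needs $\delta(n)\to 0$, which the paper's step-size assumption does not actually list. Second, as you note, boundary equilibria of the projected ODE must be ruled out for the limit set to be $\{\nabla f = 0\}$.
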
 

\begin{proof} 
See Section \ref{proof:strong_conv}.   
\end{proof} 

%\section{Asymptotic Convergence} 
\section{Zeroth-order cubic-regularized Newton algorithm} 
\label{sec:non-asymptotic}
In general, the standard Newton method does not escape saddle points for non-convex objectives \cite{anandkumar2016efficient}. On the other hand, a cubic regularized Newton method \cite{nesterov2006cubic} manages to escape saddle points by incorporating a cubic term in the optimization process. We adopt this technique in a zeroth-order optimization setting, and derive a non-asymptotic bound that quantifies convergence rate to a local minimum. We present the pseudocode in Algorithm \ref{alg:CR_ZON}. 

%We begin by defining first and second-order stationary points in both deterministic and stochastic settings. Then we analyze the convergence of Algorithm \ref{alg:CR_ZON} to an $\epsilon$-second-order stationary point (SOSP). 

\begin{algorithm}
	\LinesNotNumbered
	\SetKwInOut{Input}{Input} \SetKwInOut{Output}{Output}
	\Input{initial point $\theta_0 \in \mathbb{R}^d$, 
		a non-negative sequence $\{\alpha_n\}$, and iterations $N\geq 1$;%, Probability distribution $P_R(,)$; 
	}
	\For{$n\leftarrow 1$ \KwTo $N$}{	
		Generate $m_n$ and $b_n$ measurements for gradient and Hessian estimation respectively;
        
        \tcc{Gradient estimation}
		\centerline {$ \bar{G}^k = \frac{1}{m_n}\sum_{i=1}^{m_n} \widehat{\mathcal{D}}^k F(\theta_{n-1},\xi_{i,n}) %\mathcal{D}^k (\theta_k) = A_1 %- \int_{-M_r}^{M_r} h'(1 - G^{m_k}_{\theta_{k-1}}) \widehat{\nabla} G^{m_k}_{R^{\theta_{k-1}}} dx 
			$}%\tcp*{DRM gradient estimation}
		\tcc{Hessian estimation}
		\vspace{1ex}
		%\begin{small}
			\centerline{$
				 \bar{\mathcal{H}^k}  = \frac{1}{b_n}\sum_{j=1}^{b_n} \Tilde{\mathcal{H}}^{k} F (\theta_{n-1},\xi_{j,n}), %- \int_{-M_r}^{M_r} h'(1 - G^{m_k}_{R^{\theta_{k-1}}}) \widehat{\nabla}^2 G^b_{R^{\theta_{k-1}}} dx  + \int_{-M_r}^{M_r} h{''}(1 - G^{m_k}_{R^{\theta_{k-1}}}) \left(\widehat{\nabla} G^{m_k}_{R^{\theta_{k-1}}}\right) \left(\widehat{\nabla} G^{m_k}_{R^{\theta_{k-1}}}\right)\tr  dx, 
				$}
			\vspace{1ex}
	%	\end{small} 
    where $\widehat{\cal D}^{k}_iF(\theta,\xi)=  \frac{1}{\delta\Delta_i} \sum_{l=0}^{k} \frac{(-1)^{1-l} C^{k_1}_l \{ f(\theta+l\delta\Delta, \xi_l )\}}{l!}$ \textrm{and}
    $\widetilde{\mathcal{H}}^k F(\theta,\xi)  = \frac{(\Delta \Delta^T - \mathit{I})}{\delta^2} 
     \times \sum_{l = 0}^k \sum_{m = 0}^{k}  \frac{(-1)^{-l-m} c_l^k c_m^k f(\theta + (l+m)\delta\Delta, \xi_l)} {l! m!}.$
%		where $\widehat{\mathcal{D}}^k F(.,.)$ is defined as equation (11) in \cite{pachalyl2025generalized} and $ \Tilde{\mathcal{H}}^{k}F(.,.)$ is defined in Section \ref{sec:GRDSA_estimators}\;

        \tcc{Cubic-regularized Newton step}
        \vspace{-3ex}        
		\begin{align*}
\theta_n = \argmin_{\theta \in \mathbb{R}^d} \left\{ \tilde{\rho}_h(\theta, \theta_{n-1},\bar{G}^k , \bar{\mathcal{H}^k},\alpha_n)\right\},
%    \numberthis\label{eq:subproblem}
		\end{align*}
		where
		$\tilde{\rho}_h(x,y,g,\mathcal{H},\alpha) = \left <g, x-y \right >+\frac{1}{2}\left <\mathcal{H}(x-y),x-y\right > +\frac{\alpha}{6}\|x-y\|^3$. 
	} 
    \Output{ Choose $R$ uniformly at random and return  $\theta_R $.} 
	\caption{Cubic-regularized zeroth-order stochastic Newton algorithm (CRZON)} 
	\label{alg:CR_ZON} 
\end{algorithm} 

We begin by discussing first and second-order stationary points.
In a deterministic optimization setting, a point $\bar{\theta}$ is said to be a FOSP if $\nabla F(\bar{\theta}) = 0$. However, convergence to FOSPs does not guarantee convergence to local minima as these could be saddle points. At an SOSP the gradient vanishes, and the Hessian is positive semi-definite. 
If every saddle point, say $\theta$, is strict, i.e.,  
$\nabla F(\theta)=0\textrm{ and }\lambda_{\min}(\nabla^2 F(\theta)) < 0$, then an SOSP coincides with local minima.  Moreover, it is NP-hard to distinguish between strict and non-strict saddle points, cf. \cite{anandkumar2016efficient}. Thus, SOSPs have been adopted as the goal for saddle-escaping algorithms in the non-convex optimization literature. 

In the zeroth-order optimization setting that we consider, it is difficult to find an SOSP directly as the exact form of the objective is unavailable. Instead, noisy function measurements are available. Moreover, in the non-asymptotic regime, an algorithm can find an approximate FOSP/SOSP, which are standard notions in machine learning literature. We define these quantities below. 
%A point  $\bar{\theta}$ is said to be an $\epsilon$ first-order stationary point if $\|\nabla F(\bar{\theta})\| \leq \epsilon$. 

%Now we define the $\epsilon$ first-order stationary point in the stochastic setting. 
\begin{definition} [$\epsilon$-FOSP]
    Let $\epsilon > 0$. A random point $\theta_R$ is said to be an $\epsilon$-FOSP of the objective $F$ if $\E[\|\nabla F(\theta_R)\|] \leq \epsilon$.     
\end{definition}
\begin{definition} [$\epsilon$-SOSP]
    Let $\epsilon > 0$. A random point $\theta_R$ is said to be an $\epsilon$-SOSP of the objective function $F$ if 
    \[\max \{\sqrt{\E[\|\nabla F(\theta_R)\|]}, \frac{-1}{\sqrt{\xi}}\E[\lambda_{\min}(\nabla^2 F(\theta_R))]\} \leq \sqrt{\epsilon},\] for some $\xi > 0$. Here $\lambda_{\min}(A)$ denotes the minimum eigenvalue of $A$.   
\end{definition} 
 
For our non-asymptotic analysis, we require the following additional assumption: 
\begin{assumption}
 \label{ass:smoothness_Hessian} 
     The objective $F$ has a $L_{\mathcal{H}}$-Lipschitz continuous Hessian, i.e.,
     \begin{equation}
         \|\nabla^2 F(\theta_1) - \nabla^2 F(\theta_2)\| \leq L_{\mathcal{H}} \|\theta_1 - \theta_2\|. 
     \end{equation}
 \end{assumption}
%\todoi{Theorem 6 proof needs to be typeset. }
The main result that establishes the convergence of Algorithm \ref{alg:CR_ZON} to an approximate SOSP is given below. 
\begin{theorem}[Convergence to $\epsilon$-SOSP] 
\label{theorem:non_asymptotic}
Suppose assumptions \ref{ass:cont_diff} - \ref{ass:noise_bound} and \ref{ass:stability}- \ref{ass:smoothness_Hessian} hold. Let the iterates $\{\theta_k\}$ be obtained by running Algorithm \ref{alg:CR_ZON} with the following parameters: For $n=1,\ldots,N$, set \newpage 
\begin{align} \label{eq:hyper} 
&
    \alpha_n = 3L_{\mathcal{H}} ;  \delta = \frac{\epsilon^{\frac{1}{k}}}{\max\{2C_1^2,8C_3^4\} }; N = \frac{12\sqrt{L_{\mathcal{H}}}[F(\theta_0)-F^*]}{\epsilon^{\frac{3}{2}}} \nonumber\\
    & m_n = \frac{2C_2}{\epsilon^{2+\frac{2}{k}}};\;\;\; b_n = \frac{\max\{4C_4C(d), 24C_4^2\} }{\epsilon^{1+\frac{4}{k}}} ,
\end{align}
%    \[\alpha_n = 3L_{\mathcal{H}} ;  \delta = \frac{\epsilon^{\frac{1}{k}}}{\max\{2C_1^2,8C_3^4\} }; N = \frac{12\sqrt{L_{\mathcal{H}}}[F(\theta_0)-F^*]}{\epsilon^{\frac{3}{2}}}\] \[ m_n = \frac{2C_2}{\epsilon^{2+\frac{2}{k}}};\;\;\; b_n = \frac{\max\{4C_4C(d), 24C_4^2\} }{\epsilon^{1+\frac{4}{k}}} ,\] 
    where $C_1, C_2, C_3, C_4$ are constants specified in section \ref{proof:non_asymptotic}. Then,
    \begin{equation}
        35\sqrt{\epsilon} \geq \max \left\{\sqrt{\E[\|\nabla F(\theta_R)\|]}, \frac{-1}{50\sqrt{L_\mathcal{H}}}\E[\lambda_{\min}(\nabla^2 F(\theta_R))] \right\}. 
    \end{equation} 
\end{theorem}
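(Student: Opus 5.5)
The argument will follow the inexact cubic-regularized Newton analysis of \cite{balasubramanian2022zeroth} (itself built on \cite{nesterov2006cubic}). The only new ingredient is that the bias and variance of $\bar{G}^k$ and $\bar{\mathcal{H}}^k$ are controlled by Lemma \ref{lemma:bias_unbalanced} and its gradient analogue from \cite{pachalyl2025generalized}.

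\textbf{Step 1: estimator errors.} Let $e_n^G = \bar{G}^k - \nabla F(\theta_{n-1})$ and $e_n^H = \bar{\mathcal{H}}^k - \nabla^2 F(\theta_{n-1})$. Since $\bar{G}^k$ averages $m_n$ conditionally independent samples and the bias is $O(\delta^k)$, we get
\[
\E\|e_n^G\|^2 \le C_1^2\delta^{2k} + \frac{C_2}{m_n\delta^2}.
\]
For the Hessian, Lemma \ref{lemma:bias_unbalanced} gives bias $C_3\delta^k$ and variance $O(1/\delta^4)$ per sample. A matrix concentration (or fourth-moment) bound on the average of $b_n$ samples then gives
\[
\E\|e_n^H\|^3 \lesssim C_3^3\delta^{3k} + \Big(\frac{C_4 C(d)}{b_n\delta^4}\Big)^{3/2}.
\]
Substituting the prescribed $\delta=\Theta(\epsilon^{1/k})$, $m_n=\Theta(\epsilon^{-2-2/k})$ and $b_n=\Theta(\epsilon^{-1-4/k})$, the target bounds are $\E\|e_n^G\|^{3/2} \lesssim \epsilon^{3/2}$ and $\E\|e_n^H\|^3 \lesssim \epsilon^{3/2}$, after applying Jensen's inequality.

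\textbf{Step 2: one-step descent.} Let $h_n = \theta_n - \theta_{n-1}$ with $\alpha_n = 3L_{\mathcal{H}}$. Assumption \ref{ass:smoothness_Hessian} gives the cubic upper model
\[
F(\theta_n) \le F(\theta_{n-1}) + \langle\nabla F, h_n\rangle + \tfrac12\langle \nabla^2 F\, h_n, h_n\rangle + \tfrac{L_{\mathcal{H}}}{6}\|h_n\|^3.
\]
We replace the true derivatives by the estimates plus $e_n^G, e_n^H$. The subproblem optimality conditions are
\[
\bar{G}^k + \bar{\mathcal{H}}^k h_n + \tfrac{\alpha_n}{2}\|h_n\|h_n = 0, \qquad \bar{\mathcal{H}}^k + \tfrac{\alpha_n}{2}\|h_n\| I \succeq 0,
\]
and these show $\tilde{\rho}_h \le -\frac{\alpha_n}{12}\|h_n\|^3$. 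Young's inequality absorbs the cross terms $\langle e_n^G, h_n\rangle \le c\|e_n^G\|^{3/2} + c'\|h_n\|^3$ and $\langle e_n^H h_n, h_n\rangle \le c\|e_n^H\|^3 + c'\|h_n\|^3$. The result is
\[
\E F(\theta_n) \le \E F(\theta_{n-1}) - c\, L_{\mathcal{H}}\, \E\|h_n\|^3 + c_1 L_{\mathcal{H}}^{-1/2}\E\|e_n^G\|^{3/2} + c_2 L_{\mathcal{H}}^{-2}\E\|e_n^H\|^3.
\]
Summing over $n$ and using $N = 12\sqrt{L_{\mathcal{H}}}(F(\theta_0)-F^*)/\epsilon^{3/2}$ yields $\E\|h_R\|^3 \lesssim \epsilon^{3/2}/L_{\mathcal{H}}^{3/2}$ for uniformly random $R$.

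\textbf{Step 3: stationarity measures via $\|h_n\|$.} From the first-order optimality condition and Hessian Lipschitzness,
\[
\|\nabla F(\theta_n)\| \le \tfrac{L_{\mathcal{H}}+\alpha_n}{2}\|h_n\|^2 + \|e_n^G\| + \|e_n^H\|\,\|h_n\|.
\]
From the second-order condition,
\[
-\lambda_{\min}(\nabla^2 F(\theta_n)) \le \tfrac{\alpha_n}{2}\|h_n\| + L_{\mathcal{H}}\|h_n\| + \|e_n^H\|.
\]
Taking expectations, applying Jensen and Hölder to convert $\E\|h_R\|^3$ into bounds on $\E\|h_R\|^2$ and $\E\|h_R\|$, and inserting the Step 1 bounds, gives $\E\|\nabla F(\theta_R)\| \lesssim \epsilon$ and $-\E\lambda_{\min} \lesssim \sqrt{L_{\mathcal{H}}\epsilon}$. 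Tracking the constants should produce the stated $35\sqrt\epsilon$ and $50\sqrt{L_{\mathcal{H}}}$. The shift by one index (between $\theta_n$ and $\theta_{n-1}$) is handled as in \cite{balasubramanian2022zeroth}: the errors $e_{n+1}$ are evaluated at $\theta_n$, and their bounds are uniform in $n$.

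\textbf{Main obstacle.} The hardest step will be Step 1 for the Hessian. Lemma \ref{lemma:bias_unbalanced} only provides a second-moment (variance) bound, while Step 2 needs a third moment of the operator norm of the averaged error. I would first try to bound
\[
\E\|e^H\|^3 \le \big(\E\|e^H\|_F^4\big)^{3/4},
\]
controlling the fourth moment of a sample average by Rosenthal's inequality. This requires a fourth-moment bound on a single $\widetilde{\mathcal{H}}^k$ sample, which follows from Assumption \ref{ass:noise_bound} upgraded to fourth moments, together with Gaussian moments of $\Delta\Delta^T - I$; these moments are the source of $C(d)$. If such fourth moments are unavailable, I would fall back to working with $\E\|e^H\|^2$ and use a Young split with exponents $(2,\cdot)$ instead. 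That would change the $b_n$ exponent, so the choice $b_n \propto \epsilon^{-1-4/k}$ (which balances the variance $1/(b_n\delta^4)$ against $\epsilon$) suggests the second-moment route is the intended one.
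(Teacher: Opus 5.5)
Your proposal is correct and follows essentially the paper's route. The paper bounds $\E\|\bar G^k-\nabla F\|^2$ by a bias--variance split, bounds $\E\|\bar{\mathcal H}^k-\nabla^2F\|^3$ by combining a second-moment bound from \cite{tropp2016expected} with a fourth-moment bound via Cauchy--Schwarz, and then cites Lemmas 9 and 10 of \cite{balasubramanian2022zeroth}, which are exactly the stationarity and descent estimates you re-derive in Steps 3 and 2. Your fourth-moment caveat is well placed: the paper's step (f) uses $\E\|\Delta_{i,n}\|^4\le C_4^2/\delta^8$, which does not follow from the second-moment bound of Lemma \ref{lemma:bias_unbalanced}, so the fourth-moment route (and the paper) really does need a fourth-moment assumption. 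Drop the $(2,\cdot)$ Young fallback, though: splitting $\|e_n^H\|\,\|h_n\|^2$ that way leaves an $\|h_n\|^4$ term that the $-cL_{\mathcal H}\|h_n\|^3$ decrease cannot absorb.
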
 
\begin{proof}
See Section \ref{proof:non_asymptotic}. 
\end{proof} 
%From above result, the total sample complexity of proposed CRZON algorithm is $\sum_{i=1}^{N} m_n. (k+1) = N.m_n.(k+1) = O(\epsilon^{-(\frac{7}{2}+ \frac{1}{k})}).$ 
%\tonumberthis{\label{eq:hyper}} 
% L_{\mathcal{H}}
\begin{remark}
%The sample complexity of an algorithm using $l_n$ measurements in iteration $n$ and running for a total of $N$ iterations is  $\sum_{i=1}^{N}$. 
For finding an $\epsilon$-SOSP, the number of gradient estimates in the CRZON algorithm is $\sum_{n=1}^N m_n = O(\epsilon^{-(\frac{7}{2}+ \frac{2}{k})})$, while the number of Hessian estimates is $\sum_{n=1}^N b_n = O(\epsilon^{-(\frac{5}{2}+ \frac{4}{k})})$. Each gradient estimate uses $k+1$ function measurements and Hessian estimate requires $2k+1$ measurements. These function measurements are not required to be separate, and a subset of the ones used in forming the Hessian estimate can be reused for gradient estimation.
In any case, the overall sample complexity of CRZON algorithm is $O\left(\epsilon^{-(\frac{7}{2}+ \frac{2}{k})}\right)$. With unbiased gradient and Hessian information, cubic Newton finds an $\epsilon$-SOSP with sample complexity $O\left(\epsilon^{-3.5}\right)$, cf. \cite{tripuraneni2018stochastic}. In comparison, our bound for CRZON in the zeroth-order setting is weaker. However, this gap is expected by considering the gradients and Hessian estimates are both biased. As $k$ grows large, our estimators have lower bias, and our sample complexity bound approaches $O\left(\epsilon^{-3.5}\right)$. 
\end{remark}

\begin{remark}    
In \cite{balasubramanian2022zeroth}, where the authors consider a specialized zeroth-order setting, the corresponding bound is $O(\epsilon^{-3.5})$. The specialization in their setting is that $F(\theta)=\E_\xi[f(\theta,\xi)]$ and $f$ are both $L$-smooth. We assume $F$ alone is $L$-smooth, which is more general than \cite{balasubramanian2022zeroth}, as $L$-smoothness of $f$ implies smoothness of the objective $F$ as well while the converse is not true in general. 
\end{remark} 
\begin{remark}   
The sample complexity of a stochastic gradient algorithm to find an $\epsilon$-FOSP with unbiased gradient information is $O(\epsilon^{-4})$, cf. \cite{ghadimi2013stochastic}. On the other hand, the generalized SPSA, see \cite{pachalyl2025generalized}, with a gradient estimate formed using $k+1$ function measurements,  finds an $\epsilon$-FOSP in a zeroth-order optimization setting with sample complexity $O(\epsilon^{-(4+\frac{4}{k})})$. This sample complexity approaches $O(\epsilon^{-4})$ as $k$ becomes large. 
Our results are in a similar spirit, albeit for finding SOSPs. In particular, our bound $O(\epsilon^{-(\frac{7}{2}+\frac{2}{k})})$ for CRZON algorithm approaches $O(\epsilon^{-3.5})$  as $k$ becomes large. 
\end{remark} 
\begin{remark}    
For the special case of three measurements-based Hessian estimator, i.e., $k=1$, we obtain a sample complexity bound of $O(\epsilon^{-5.5})$ for finding an $\epsilon$-SOSP.
This result is of independent interest, as such an estimator is closer to the three measurements-based balanced Hessian estimators studied in \cite{bhatnagar2015simultaneous},\cite{prashanth2016adaptive},\cite{balasubramanian2022zeroth}. To the best of our knowledge, there are no sample complexity bounds for stochastic Newton algorithms using well-known simultaneous perturbation-based Hessian estimates in the aforementioned references. Our results close this gap and we also provide generalized Hessian estimators that can approach the sample complexity of a stochastic Newton algorithm with unbiased gradient/Hessian estimates. 
\end{remark}  
%\clearpage 
\newpage 
\section{Proofs} 
\label{sec:proofs}
\subsection{Proof of Lemma \ref{lemma:kth_order}} 
\label{proof:kth_order}
\begin{proof}
We use the following combinatorial identities in the proof: 
\begin{align}
    &\frac{1}{1} {k \choose 1} - \frac{1}{2} {k \choose 2} + \cdots + (-1)^{k + 1} \frac{1}{k} {k \choose k} = \sum_{j = 1}^k \frac{1}{j},\label{eq:ident1}\\
    &{k \choose 1} - {k \choose 2} + {k \choose 3} - \ldots (-1) ^{k+1} {k \choose k} = 1,\label{eq:ident2}    \\
 &\sum_{j = 0} ^k (-1)^{k-j} {k \choose j} j^q = 0 \textrm{ for any }0 < q < k.\label{eq:ident3}
\end{align}  
Recall that 
\begin{align*}
   & \mathcal{H}^{k} F(\theta) \\
    &=\frac{(\Delta \Delta^T - \mathit{I})}{\delta^2} \hspace{-0.4mm}\Bigg[c_0^{2k}F(\theta)\hspace{-0.4mm} +\hspace{-0.4mm} 2\sum_{m = 1}^k \hspace{-0.4mm} \frac{(-1)^{-m}c_0^k {k \choose m} F(\theta + m \delta\Delta)}{m} + \sum_{l = 1}^k \sum_{m = 1}^k \frac{(-1)^{2-l-m}{k \choose l} {k \choose m} F(\theta + (l+m)\delta\Delta)}{lm}\Bigg].
\end{align*} 
Based on Taylor's expansion, we analyze the various coefficients.
The coefficient of $F(\theta)$ can be simplified as follows. 
\begin{align}
	 &(c_0^k)^2 + 2 \sum_{m=1}^k \frac{(-1)^{2-m}c_0^k {k \choose m}}{m} + \sum_{l=1}^k \sum_{m=1}^k \frac{(-1)^{2-l-m}{k \choose l} {k \choose m}}{lm} \label{eq:temp}  \\
    =& \left(\sum_{j = 1}^k \frac{1}{j}\right)^2  + 2 \left(\sum_{j = 1}^k \frac{1}{j}\right) \left( \sum_{m=1}^k \frac{(-1)^{2-m} {k \choose m}}{m}\right)
    %(\because \text{In second term, pull out} c^0_k \text{and expand it}, \text{In double summation term, pull out terms independent of m from the inner summation.}  )
	  +  \left(\sum_{l=1}^k \frac{(-1)^{1-l} {k \choose l}}{l}\right)  \left(\sum_{m=1}^k \frac{(-1)^{1-m} {k \choose m}}{m}\right). \nonumber 
\end{align} 
In the first term of \eqref{eq:temp}, we expand $c^0_k$ from \eqref{eqn:clk}. In the second term, we pull out $c^0_k$ and expand it. In the third double summation term, we first pull out terms independent of $m$ from the inner summation. But now what is left in the inner summation with index $m$ is independent of the outer index $l$. Hence, the double summation can be simplified as a product of two single summations. 
Further using Identity \eqref{eq:ident1} in the above three summations,
	we have  
\begin{align*}
    &= \left(\sum_{j = 1}^k \frac{1}{j}\right)^2 \hspace{-1mm} - 2 \left(\sum_{j = 1}^k \frac{1}{j}\right) \left(\sum_{j = 1}^k \frac{1}{j}\right) + \left(\sum_{j = 1}^k \frac{1}{j}\right) \left(\sum_{j = 1}^k \frac{1}{j}\right)\\
	%& (\text{Use Identity \ref{eq:ident1} in }\, 3\, { summations}) \\
    & = 2 \left(\sum_{j = 1}^k \frac{1}{j}\right) \left(\sum_{j = 1}^k \frac{1}{j}\right) - 2 \left(\sum_{j = 1}^k \frac{1}{j}\right) \left(\sum_{j = 1}^k \frac{1}{j}\right)\\ 
    & = 0. 
\end{align*} 
The coefficient of $\Big<\nabla F(\theta),\Delta\Big>$ in the aforementioned Taylor's expansion is given as follows: 
\begin{align*}
 & 2 \sum_{m=1}^k (-1)^{2-m}c_0^k {k \choose m}  + \sum_{l=1}^k \sum_{m=1}^k \frac{(-1)^{2-l-m}{k \choose l} {k \choose m}}{lm} (l+m)\\
 & = 2 c_0^k  \sum_{m=1}^k (-1)^{2-m}{k \choose m} + \sum_{l=1}^k \sum_{m=1}^k  \frac{(-1)^{2-l-m}{k \choose l} {k \choose m}}{m}+ \sum_{l=1}^k \sum_{m=1}^k  \frac{(-1)^{2-l-m}{k \choose l} {k \choose m}}{l} \\
& = - 2\sum_{j = 1}^k \frac{1}{j}  + \left(\sum_{l=1}^k (-1)^{1-l}{k \choose l}\right) \left(\sum_{m=1}^k \frac{(-1)^{1-m} {k \choose m}}{m}\right) +  \left(\sum_{l=1}^k \frac{(-1)^{1-l} {k \choose l}}{l}\right) \left(\sum_{m=1}^k (-1)^{1-m}{k \choose m}\right).
\end{align*}
In the previous step, the two double summations can be written as a product of single summations, as in the previous simplification. Now using Identities \eqref{eq:ident1} and \eqref{eq:ident2} on the above single summations, we obtain   
\begin{align*}
& = - 2 \left( \sum_{j = 1}^k \frac{1}{j}\right)+ \left( \sum_{j = 1}^k \frac{1}{j}\right) +\left( \sum_{j = 1}^k \frac{1}{j}\right) \\
& = 0. 
\end{align*}
Next, the coefficient of $\Big<\nabla^2 F(\theta),\Delta \otimes \Delta\Big>$ in the Taylor's expansion is given as follows: 
\begin{align}
& \frac{1}{2!}\Big[2 \sum_{m=1}^k \frac{(-1)^{2-m}c_0^k {k \choose m}}{m}m^2  + \sum_{l=1}^k \sum_{m=1}^k \frac{(-1)^{2-l-m}{k \choose l} {k \choose m}}{lm} (l+m)^2 \Big] \nonumber \\
 & =  c_0^k \sum_{m=1}^k (-1)^{2-m} {k \choose m}m  +	\frac{1}{2}\sum_{l=1}^k \sum_{m=1}^k \hspace{-1mm} \frac{(-1)^{2-l-m}{k \choose l} {k \choose m}}{lm} l^2  + \sum_{l=1}^k \sum_{m=1}^k (-1)^{2-l-m}{k \choose l} {k \choose m} \nonumber \\
 & \qquad \qquad \qquad \qquad \qquad + \frac{1}{2}\sum_{l=1}^k \sum_{m=1}^k \frac{(-1)^{2-l-m}{k \choose l} {k \choose m}}{lm}m^2 \nonumber \\
& =  c_0^k \sum_{m=1}^k (-1)^{2-m} {k \choose m}m +	\frac{1}{2}\left(\sum_{l=1}^k (-1)^{1-l} {k \choose l} l\right) \left(\sum_{m=1}^k \frac{(-1)^{1-m} {k \choose m}}{m} \right) \nonumber \\
	& \qquad + \left(\sum_{l=1}^k (-1)^{1-l} {k \choose l} \right) \left( \sum_{m=1}^k (-1)^{1-m} {k \choose m} \right) + \left(\frac{1}{2}\sum_{l=1}^k \frac{(-1)^{1-l} {k \choose l}}{l}\right) \left(\sum_{m=1}^k (-1)^{1-m} {k \choose m}m\right) \label{eq:temp2} \\
 % & = 0+0+ 1 + 0 \\
 & = 1. \nonumber 
\end{align} 
In \eqref{eq:temp2} above, the first, third, and fourth terms vanish as a consequence of the Identity \eqref{eq:ident3} with $q=1$.  The third term is $1$ from the Identity \eqref{eq:ident2}, and hence the result.

The coefficient of $\Big<\nabla^q F(\theta), \Delta \otimes\dots  \otimes\Delta\Big>$, $3\leq q \leq k$ in the Taylor's expansion is as follows:  
\begin{align*}
& \frac{1}{q!}\Bigg[2 \sum_{m=1}^k \frac{(-1)^{2-m}c_0^k {k \choose m}}{m}m^q  + \sum_{l=1}^k \sum_{m=1}^k \frac{(-1)^{2-l-m}{k \choose l} {k \choose m}}{lm} (l+m)^q \Bigg] \\
%& = 0 + 0 \\
& = 0. 
\end{align*}  
Since $q\geq 3$, the first term above vanishes as a consequence of the Identity \eqref{eq:ident1} for $q\geq 2$. In the second term, the Binomial expansion of $(l + m)^q$ will yield $(q+1)$ terms, each having the form ${q \choose i} l^i m^{q-i}$. Since $q\geq 3$, for any $i$ s.t. $0\leq i \leq q$, $\max (i,q-i)$ will be at least $2$. This means at least one of the exponents of $l$ and $m$ will be $2$ or more in each term of the Binomial expansion. Without loss of generality, lets assume $l$ is the index whose exponent is at least $2$. The same argument will hold for $m$. After cancellation with $l$ in the denominator, the $l$'s exponent will be $1$ or more. Now this double summation term can be split into a product of single summations, where the index $l$ summation term will be of the form $\left(\sum_{l=1}^k (-1)^{1-l} {k \choose l} l^{p}\right)$, $p\geq 1$. From Identity \eqref{eq:ident3}, index $\ell$ summation term will be $0$. Hence, we have shown that every double summation term from the binomial expansion is $0$.
\end{proof} 

\subsection{Proof of Lemma \ref{lemma: k1equalk2}}
\begin{proof}
\label{proof: k1equqlk2} 
Recall that
\begin{align}
&
    \mathcal{H}^k F(\theta) =\mathcal{D}^k \circ \mathcal{D}^1 F(\theta) \nonumber\\
%    & =  \Delta \left[\frac{\mathcal{D}^kF(\theta + \delta\Delta) - \mathcal{D}^kF(\theta )}{\delta}\right]\\
%    &= \frac{9M_n}{4\eta^4\delta^2} \left[F(\theta + 2 \delta\Delta) +  F(\theta - 2\delta\Delta) - 2F(\theta)\right]
& = (\Delta \Delta^T - \mathit{I})\Bigg[ \sum_{l = 0}^{k}\frac{(-1)^{1-l}c_l^k \mathcal{D}^k F(\theta + (l+1)\delta\Delta)}{\delta^2 l!}   -  \sum_{l = 0}^{k}\frac{(-1)^{1-l}c_l^k \mathcal{D}^k F(\theta + l\delta\Delta)}{\delta^2 l!} \Bigg]. \label{eq:k1unequalk2}
\end{align}
Now we apply the Taylor's expansion to each term in \eqref{eq:k1unequalk2} and derive the coefficients of different-order derivatives. 
The coefficient of $ F(\theta)$ is given as follows: 
\begin{align*}
& = \sum_{l = 0}^k (-1)^{1-l}  c_0^k - \sum_{l = 0}^k (-1)^{1-l}  c_0^k = 0.    
\end{align*}
Next, the coefficient of $ \nabla F(\theta)$ is obtained as follows: 
\begin{align*}
& = -c_0^k + \sum_{l = 1}^k (-1)^{1-l} \frac{1}{l} {k \choose l} (l+1)    -   \sum_{l = 1}^k (-1)^{1-l} \frac{1}{l} {k \choose l} l \\
& = -c_0^k +  \sum_{l = 1}^k (-1)^{1-l} \frac{1}{l} {k \choose l} \\
 & = - \sum_{j = 1}^k \frac{1}{j} +  \sum_{j = 1}^k \frac{1}{j} = 0.   
\end{align*}
The coefficient of $ \nabla^2 F(\theta)$ is given as follows: 
\begin{align*}
& \frac{1}{2!}\Big[\hspace{-0.4mm} - c_0^k+\sum_{l = 1}^k\hspace{-0.4mm} (-1)^{1-l} \frac{1}{l} {k \choose l} (l+1)^2  - \hspace{-0.4mm}\sum_{l = 1}^k \hspace{-0.4mm} (-1)^{1-l} \frac{1}{l} {k \choose l} l^2 \Big] \\
&  = \frac{1}{2!}\Big[ -c_0^k+\sum_{l = 2}^k (-1)^{1-l} \frac{1}{l} {k \choose l} (2l+1) \Big]\\
& = \sum_{l = 2}^k (-1)^{1-l} {k \choose l}  = 1. 
\end{align*}
For any $q > 2$, the coefficient of $ \nabla^q F(\theta)$ is given as follows: 
\begin{align*}
& \frac{1}{2.q!}\left[-c_0^k  + \sum_{l = 1}^k (-1)^{1-l} \frac{1}{l} {k \choose l} (l+1)^q\right] - 
\frac{1}{2.q!}\left[c_0^k (-1)^{q+1}
\sum_{l = 2}^k (-1)^{1-l} \frac{1}{l} {k \choose l} (l-1)^q\right] \\
&  =\frac{1}{(q-1)!}. % \begin{cases}
\end{align*} 
Thus for any $q > 2$, the coefficient is not equal to zero. This completes the proof. 
%\todoi{Add the last line }
\end{proof}

\subsection{Proof of Lemma \ref{lemma:k1k2}}

\begin{proof} 
%\label{proof:k1k2}
\label{proof:k1k2}
As before, we apply Taylor's expansion to each term in \eqref{eq:Hessiank1k2} and combine the coefficients of terms with similar order derivatives in this new Hessian operator. 

The coefficient of $F(\theta)\delta^{-2}$ is the following:
	\begin{align*}
		& c_0^{k_1} c_0^{k_2}  + c_0^{k_2} \sum_{m = 1}^{k_1} \frac{(-1)^{2-m} {k_1 \choose m}}{m} 
+ c_0^{k_1} \sum_{l = 1}^{k_2} \frac{(-1)^{2-m} {k_2 \choose m} }{l}  + \left(\sum_{l=1}^{k_2} \frac{(-1)^{1-l} {k_2 \choose l}}{l}\right)  \left(\sum_{m=1}^{k_1} \frac{(-1)^{1-m} {k_2 \choose m}}{m}\right) \\
		& \overset{(a)}{=} c_0^{k_1} c_0^{k_2} + c_0^{k_2}(- c_0^{k_1}) + c_0^{k_1}(- c_0^{k_2}) + c_0^{k_1} c_0^{k_2}  \\
		& = 0, 
	\end{align*} 
    where $(a)$ follows from \eqref{eq:ident1}. 

Coefficient of $\Big<\nabla F(\theta),\Delta\Big>\delta^{-1}$ is the following:
\begin{align*}
		& c_0^{k_2} \sum_{m = 1}^{k_1} (-1)^{2-m} {k_1 \choose m} +  c_0^{k_1} \sum_{l = 1}^{k_2} (-1)^{2-l} {k_2 \choose l} +  \sum_{l = 1}^{k_1} \sum_{m = 1}^{k_2} (-1)^{2-l-m}{k_2 \choose l} {k_1 \choose m}  (1/l+1/m) \\
	& = -c_0^{k_2} -c_0^{k_1} + \hspace{-1mm} \left[\sum_{l=1}^{k_2} (-1)^{1-l}{k_2 \choose l}\right] \left[\sum_{m=1}^{k_1} \frac{(-1)^{1-m} {k_1 \choose m}}{m}\right]] +  \left[\sum_{l=1}^{k_2} \frac{(-1)^{1-l} {k_2 \choose l}}{l}\right] \left[\sum_{m=1}^{k_1} (-1)^{1-m}{k_1 \choose m}\right] \\
	& \overset{(b)}{=} -c_0^{k_2} -c_0^{k_1} + c_0^{k_2}.(1) + 1.(c_0^{k_2})\\ 
	& = 0, 
	\end{align*}
where $(b)$ follows from \eqref{eq:ident1} and \eqref{eq:ident2}. 

Coefficient of $\Big<\nabla^2 F(\theta),\Delta \otimes \Delta\Big>$ is given as follows:
\begin{align}
	& \frac{1}{2!}\Big[ c_0^{k_2}\sum_{m=1}^{k_1} \frac{(-1)^{2-m} {k_1 \choose m}}{m}m^2 + c_0^{k_1}\sum_{l=1}^{k_2} \frac{(-1)^{2-l} {k_2 \choose l}}{l}l^2 +  \sum_{l=1}^{k_2} \sum_{m=1}^{k_1} \frac{(-1)^{2-l-m}{k_2 \choose l} {k_1 \choose m}}{lm} (l+m)^2 \Big]. \nonumber 
% & + \sum_{l=1}^k \sum_{m=1}^k \frac{(-1)^{2-l-m}{k \choose l} {k \choose m}}{lm} (l+m)^2 \Big] \nonumber 
\end{align}
The first two terms are $0$ (by Identity \ref{eq:ident3}). Along similar lines as the proof of Lemma \ref{lemma: k1equalk2}, the double summation term can be expanded into three terms. Each of these terms can be written as a product of single summations. The only non-zero term here corresponds to the $2lm$ factor term and turns out to be $1$ (from Identity (\ref{eq:ident2})).  

Coefficient of $\Big<\nabla^q F(\theta), \Delta_\otimes\dots  \Delta\Big>\delta^{q-2}$, $3\leq q \leq k$,  is as follows:
\begin{align}
	& \frac{1}{q!}\Bigg[ c_0^{k_2}\sum_{m=1}^{k_1} \frac{(-1)^{2-m} {k_1 \choose m}}{m}m^q + c_0^{k_1}\sum_{l=1}^{k_2} \frac{(-1)^{2-l} {k_2 \choose l}}{l}l^q + \sum_{l=1}^{k_1} \sum_{m=1}^{k_2} \frac{(-1)^{2-l-m}{k_2 \choose l} {k_1 \choose m}}{lm} (l+m)^q \Bigg] \label{eq:CoeffGenqUneqTrunc}\\
%& = 0 + 0 + 0 \nonumber \\
& = 0. \nonumber
\end{align}
The first two terms in \eqref{eq:CoeffGenqUneqTrunc} are zero from \eqref{eq:ident3}. The third term can be shown to be $0$ along similar lines as the proof of Lemma  \ref{lemma:kth_order}. 
	
We now consider the $q=k+1$ case. Continuing from (\ref{eq:CoeffGenqUneqTrunc}),  one of the first two terms will be zero and the other non-zero, unlike the previous case. 
We consider both these terms (as this algebra will be useful for the range of $q$ we consider next in the proof) and the terms coming from $l^q$ and $m^q$ from the double summation in (\ref{eq:CoeffGenqUneqTrunc}) below first. We obtain
\begin{align}
& c_0^{k_2}\sum_{m=1}^{k_1} (-1)^{2-m} {k_1 \choose m}m^k  + c_0^{k_1}\sum_{l=1}^{k_2} (-1)^{2-l} {k_2 \choose l}l^k   	\left(\sum_{l=1}^{k_2} (-1)^{1-l} {k_2 \choose l} l^k \right) \left(\sum_{m=1}^{k_1} \frac{(-1)^{1-m} {k_1 \choose m}}{m} \right) + \nonumber \\
& \qquad \qquad \qquad \qquad \qquad + \left(\sum_{l=1}^{k_2} \frac{(-1)^{1-l} {k_2 \choose l}}{\ell}\right) \left(\sum_{m=1}^{k_1} (-1)^{1-m} {k_1 \choose m}m^k\right) \label{eq:temp1} \\
& = -c_0^{k_2}\sum_{m=1}^{k_1} (-1)^{1-m} {k_1 \choose m}m^k  - c_0^{k_1}\sum_{l=1}^{k_2} (-1)^{1-l} {k_2 \choose l}l^k + \nonumber  \\    
	& \qquad \qquad \qquad \qquad \qquad  \left [\sum_{l=1}^{k_2} (-1)^{1-l} {k_2 \choose l} l^k \right] c_0^{k_1} + c_0^{k_2} \left [\sum_{m=1}^{k_2} (-1)^{1-m} {k_2 \choose m}m^k\right] \label{eq:temp2_1} \\
	& = 0. \nonumber
\end{align}
The remaining terms of the Binomial expansion from the double summation of  \eqref{eq:CoeffGenqUneqTrunc} can be shown to be $0$ along similar lines as the proof of Lemma~\ref{lemma: k1equalk2}. There arises a single sum of the form $\left(\sum_{l=1}^{k_2} (-1)^{1-l} {k_2 \choose l} l^{p}\right)$, $1\leq p\leq (k-1)$ OR $\left(\sum_{m=1}^{k_1} (-1)^{1-m} {k_1 \choose m} m^{k-1}\right)$, both of which are $0$ from \eqref{eq:ident3}.

The next case that we consider is $k+2 \leq q \leq k'+1$:  Continuing from (\ref{eq:CoeffGenqUneqTrunc}), we consider the first two terms and the terms coming from $l^q$ and $m^q$ from the double summation in (\ref{eq:CoeffGenqUneqTrunc}) first. Note that the first two terms can both be non-zero in this range of $q$ (specifically for $q=k' + 1$). The simplification of these four terms will be exactly similar to (\ref{eq:temp1}) and (\ref{eq:temp2}),  where $m^k$ and $l^k$ must be replaced by $m^{q-1}$ and $l^{q-1}$ respectively and hence will be $0$. 

Next, we consider the remaining terms of the Binomial expansion from the double summation of  (\ref{eq:CoeffGenqUneqTrunc}). 

{\bf{Case 1:}} $\mathbf{k=k_1}$ 

Let us first consider the term ${q \choose 1} lm^{q-1}$. The associated double summation term gives us the following product of single sums.
	\begin{align}
		& \frac{q}{q!} \underbrace{\left(\sum_{l=1}^{k_2} (-1)^{1-l} {k_2 \choose l} \right)}_{1} \left(\sum_{m=1}^{k_1} (-1)^{1-m} {k_1 \choose m}m^{q-2} \right) \nonumber \nonumber \\
		& = \frac{1}{(q-1)!} \left(\sum_{m=1}^{k} (-1)^{1-m} {k \choose m}m^{q-2} \right).  \label{eq:Coeffk1k2}
	\end{align}
	As $q\geq k_1+2=k+2$, the exponent of $m$ is  $\geq k_1=k$. Hence the above sum will be non-zero.  

Consider the remaining $q-2$ terms of the form ${q \choose i} l^im^{q-i}$, for $i=2,\dots,q-1$. The associated typical double summation term gives us the following product of single sums.
\begin{equation}
	{q \choose i} \left[\sum_{l=1}^{k_2} (-1)^{1-l} {k_2 \choose l}l^{i-1} \right] \hspace{-2mm}\left[\sum_{m=1}^{k_1} (-1)^{1-m} {k_1 \choose m}m^{q-i-1} \right]. \nonumber 
	\end{equation}
Consider the $l$ summation above. The exponent of $l$ is at least $1$ as $i\geq2$. The exponent can be at most $q-2$. Since $q\leq k'+1$, i.e.  $q\leq k_2+1$, we have $(q-2) \leq (k_2 - 1)$. Hence the exponent of $l$ is between $1$ and $k_2 - 1$. From Identity \ref{eq:ident3}, this sum is always zero. 

Hence, we are only left with the term in (\ref{eq:Coeffk1k2}). This exactly matches the coefficient in the summation of \eqref{eq:HessianUneqTrunc} with $i$ replaced by $q-2$.

{\bf {Case 2:}} $\mathbf{k=k_2}$

Based on an analogous argument, we can arrive at the same expression as (\ref{eq:Coeffk1k2}) for this case also. We briefly outline this. Here, we need to consider the double summation term associated with $ql^{q-1}m^{1}$, which is as follows. 
\begin{align}
	& \frac{q}{q!} \left(\sum_{l=1}^{k_2} (-1)^{1-l} {k_2 \choose l} l^{q-2}\right) \underbrace{\left(\sum_{m=1}^{k_1} (-1)^{1-m} {k_1 \choose m} \right)}_{1} \nonumber \nonumber \\
		& = \frac{1}{(q-1)!} \left(\sum_{l=1}^{k} (-1)^{1-l} {k \choose l}l^{q-2} \right).  \label{eq:Coeffk1k2Case2}
	\end{align}
Consider the remaining $q-2$ terms of the form ${q \choose j} l^{q-j}m^{j}$, for $j=2,\dots,q-1$.
The associated typical double summation term gives us the following product of single sums.
\begin{equation}
	{q \choose j} \sum_{l=1}^{k_2} (-1)^{1-l} {k_2 \choose l}l^{q-j-1} \times \sum_{m=1}^{k_1} (-1)^{1-m} {k_1 \choose m}m^{j-1}.  \nonumber 
	\end{equation}
The $m$-summation term can be argued to vanish for all $j=2,\dots,q-1$ in exactly the similar manner as the $\ell$-summation in the previous case. Hence, we are left with the same expression as the previous case. This completes the proof.    
\end{proof}

\subsection{Proof of Lemma \ref{lemma:bias_unbalanced}} 
\label{proof:bias_unbalanced}
\begin{proof}
    From Lemma \ref{lemma:kth_order}, we have
    \begin{align}
    &
        \mathcal{H}^k F(\theta) = (\Delta \Delta^T - \mathit{I}) \left[\Delta^T \nabla^2 F(\theta) \Delta\right] + O(\delta^k).\nonumber%\\
        %&\qquad = \frac{9}{\eta^4}M\left[\sum_{i = 1}^d \sum_{j = 1}^d  \Delta^i \Delta^j \nabla^2_{i,j} F(\theta)\right] + O(\delta^k)
    \end{align}
The rest of the proof follows from \cite{prashanth2025gradient} (Chapter 6, Lemma 6.3 ) by taking the expectation on both sides. % and considering the cases (i) $i = j$ and (ii) $i \neq j$. 
By using \ref{ass:noise_bound}, it is easy to see that $\E\|\widetilde{\mathcal{H}}^k - \E[\widetilde{\mathcal{H}}^k|\mathcal{F}_n]\|^2 \leq \E[\|\Tilde{\mathcal{H}}^k\|^2] = O\Big(\frac{1}{\delta^4}\Big).$ 
%where $C_2 = k_1\sigma^2c^l_{\max}\alpha_2^2$
\end{proof}
\subsection{Proof of Theorem \ref{theorem:strong_conv}} 
\label{proof:strong_conv}
\begin{proof} 
%\todoi{add \cite{nesterov2006cubic} for asymptotic convergence of cubic Newton}
The proof follows from Theorem 6.7 in \cite{prashanth2025gradient} by satisfying all assumptions except A.6.11. The assumption A.6.11 follows from Lemma \ref{lemma:bias_unbalanced}.   
\end{proof}  
\subsection{Proof of Theorem \ref{theorem:non_asymptotic}}
\label{proof:non_asymptotic}
The proof proceeds through a sequence of lemmas.
%\begin{proof}
   We follow the proof technique from \cite{balasubramanian2022zeroth}. Unlike \cite{balasubramanian2022zeroth}, we do not assume $f(\cdot,\xi)$ is smooth. Instead, we work with a smooth objective $F$ and this leads to significant deviations in the proof.
%\section{Non-asymptotic bound for CR-Newton method}

\begin{lemma}   \label{lem:mse_grad}
    Let $\bar{G}^k$ be computed as in Algorithm \ref{alg:CR_ZON} with $m_n$ samples. Under assumptions \ref{ass:cont_diff} - \ref{ass:noise_bound}, we have
    \begin{equation}
        \E\Big[\left\|\bar{G}^k - \nabla f(\theta_{n-1},\xi_n)\right\|^2\Big] \leq \frac{2C_2}{m_n \delta^2} + 2C_1^2 \delta^{2k}, 
    \end{equation}
%where $\left\|\E [\mathcal{D}^k f(\theta_{k-1})] - \nabla f(\theta_{k-1})\right\| \leq C_1 \delta^k$ and \; \; $\E \left[\|\mathcal{D}^k f(\theta_{k-1}) - \E[\mathcal{D}^k f(\theta_{k-1})]\|^2\right] \leq \frac{C_2}{\delta^2}$, 
for some constants $C_1, C_2>0$. 
\end{lemma}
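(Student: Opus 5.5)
The plan is a bias--variance decomposition of the averaged estimator. Write $\theta=\theta_{n-1}$ and let $\mathbb{E}_n$ denote conditional expectation given $\mathcal{F}_{n-1}$ (so $\theta_{n-1}$ is fixed). We read the target $\nabla f(\theta_{n-1},\xi_n)$ as the true gradient $\nabla F(\theta_{n-1})$, since only $F$ is assumed smooth.

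Let $g:=\mathbb{E}_n[\widehat{\mathcal{D}}^k F(\theta,\xi)]$ be the common conditional mean of the $m_n$ summands of $\bar{G}^k$. By $\|a+b\|^2\le 2\|a\|^2+2\|b\|^2$,
\begin{equation*}
\|\bar G^k-\nabla F(\theta)\|^2 \le 2\|\bar G^k - g\|^2 + 2\|g-\nabla F(\theta)\|^2 .
\end{equation*}
This gives two terms.

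\textbf{Bias term.} Since the noise $\xi_l$ has zero mean and is independent of $\Delta$, the noise drops out in expectation. The noiseless operator $\mathcal{D}^k$ in \eqref{eq:kth_grdsa} is the order-$k$ truncation of \eqref{gradient_operator_unbalanced}. Taylor-expand $F(\theta+l\delta\Delta)$ to order $k+1$, with a remainder controlled through Assumption \ref{ass:cont_diff}, and use the identities \eqref{eq:ident1}--\eqref{eq:ident3}. This yields
\begin{equation*}
\mathcal{D}^k F(\theta)=\Delta\,\Delta^T\nabla F(\theta) + O(\delta^k),
\end{equation*}
with the same coefficient cancellations as in the proof of Lemma \ref{lemma:kth_order}, one level lower. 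Taking the expectation over Gaussian $\Delta$ and using $\mathbb{E}[\Delta\Delta^T]=I$ gives $\|g-\nabla F(\theta)\|\le C_1\delta^k$. This is exactly the bias result of \cite{pachalyl2025generalized} for the $(k+1)$-measurement estimator, which we invoke directly.

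\textbf{Variance term.} The $m_n$ summands are i.i.d. given $\mathcal{F}_{n-1}$, because they use fresh $(\Delta,\xi)$. Hence
\begin{equation*}
\mathbb{E}_n\|\bar G^k-g\|^2=\frac{1}{m_n}\,\mathbb{E}_n\|\widehat{\mathcal{D}}^kF(\theta,\xi)-g\|^2\le \frac{1}{m_n}\,\mathbb{E}_n\|\widehat{\mathcal{D}}^kF(\theta,\xi)\|^2 .
\end{equation*}
The estimator is $\delta^{-1}\Delta$ times a fixed linear combination of the $k+1$ values $f(\theta+l\delta\Delta,\xi_l)$, with coefficients $c_l^k/l!$ bounded by a constant depending only on $k$. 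By Cauchy--Schwarz its squared norm is at most
\begin{equation*}
\delta^{-2}(k+1)\Big(\max_l c_l^k/l!\Big)^2\,\|\Delta\|^2\sum_l f(\theta+l\delta\Delta,\xi_l)^2 .
\end{equation*}
Bounding the expectation needs the second moment of $f$ from Assumption \ref{ass:noise_bound}, together with a moment bound on $\|\Delta\|$. This gives $C_2/\delta^2$.

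\textbf{Main obstacle.} The coupling between $\|\Delta\|^2$ and $f(\theta+l\delta\Delta,\xi)^2$ is the expected difficulty, because Assumption \ref{ass:noise_bound} bounds only $\mathbb{E}|f|^2$ and not a joint moment. The first attempt will be to interpret the assumption as bounding $\mathbb{E}[\|\Delta\|^2 f^2]$, or to invoke Assumption \ref{ass:stability} so that $\theta$ ranges over a compact set. Failing that, we would apply H\"older's inequality with Gaussian moments of $\Delta$ and absorb the factor of $d$ into $C_2$.

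Taking the total expectation and combining the two terms gives the bound $\frac{2C_2}{m_n\delta^2}+2C_1^2\delta^{2k}$.
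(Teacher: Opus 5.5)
Your proposal follows the paper's proof. Both use the split $\|\bar G^k-\nabla F\|^2\le 2\|\bar G^k-\E\bar G^k\|^2+2\|\E\bar G^k-\nabla F\|^2$ and the $1/m_n$ reduction for an average of independent single-sample estimates. Both also take the bias bound $C_1\delta^k$ from Lemma~1 of \cite{pachalyl2025generalized}.

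You diverge only on the single-sample variance bound $C_2/\delta^2$. The paper takes this bound from the same lemma; you try to derive it from Assumption~\ref{ass:noise_bound}. As you suspect, that derivation does not close, and neither fallback fixes it.

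\begin{itemize}
\item H\"older's inequality $\E[\|\Delta\|^2 f^2]\le(\E\|\Delta\|^{2p})^{1/p}(\E|f|^{2q})^{1/q}$ forces $q>1$, because $\|\Delta\|^2$ is unbounded for Gaussian $\Delta$. That requires a moment of $f$ beyond the second, which is not assumed.
\item Boundedness of $\theta_{n-1}$ (Assumption~\ref{ass:stability}, which is not a hypothesis of this lemma anyway) does not keep $\theta_{n-1}+l\delta\Delta$ in a compact set. No growth condition on $F$ is available to compensate.
\end{itemize}

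So either cite the variance bound in the same way you cite the bias bound, or state a stronger hypothesis explicitly. One such hypothesis is a uniform bound on $\E[\|\Delta\|^2 f(\theta+l\delta\Delta,\xi_l)^2]$.

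Your reading of $\widehat{\mathcal{D}}^k$ as \eqref{eq:kth_grdsa}, with $\Delta$ in the numerator, is the sensible one when the Hessian estimator's Gaussian perturbations are reused. The literal $1/(\delta\Delta_i)$ form in Algorithm~\ref{alg:CR_ZON} would not have a finite second moment, since $\E[\Delta_i^{-2}]=\infty$.
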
 
\begin{proof}
From Lemma 1 in \cite{pachalyl2025generalized}, we have  
\begin{align} \label{eq:bias_new}
&
    \left\|\E [\widehat{\mathcal{D}}^k F(\theta_{n-1})] - \nabla F(\theta_{n-1})\right\| \leq C_1 \delta^k \;\;\;  \textrm{and } \nonumber\\
    &\E \left[\|\widehat{\mathcal{D}}^k F(\theta_{n-1}) - \E[\widehat{\mathcal{D}}^k F(\theta_{n-1})]\|^2\right] \leq \frac{C_2}{\delta^2},
\end{align}
 for some constants $C_1, C_2$. 
%\[\left\|\E [\mathcal{D}^k f(\theta_{k-1})] - \nabla f(\theta_{k-1})\right\| \leq C_1 \delta^k\] \[\E \left[\|\mathcal{D}^k f(\theta_{k-1}) - \E[\mathcal{D}^k f(\theta_{k-1})]\|^2\right] \leq \frac{C_2}{\delta^2} \]
Now we calculate
\begin{align*}
    &
    \E[\|\bar{G}^k - \nabla F(\theta_{n-1})\|^2] \\
    & = \E[\|\bar{G}^k - \E[\bar{G}^k] + \E[\bar{G}^k] - \nabla F(\theta_{n-1})\|^2] \\
    & \overset{(a)}{\leq} 2\Big(\E[\|\bar{G}^k - \E[\bar{G}^k]\|^2 + \|\E[\bar{G}^k] - \nabla F(\theta_{n-1})\|^2] \Big) \\
    & \overset{(b)}{\leq} \frac{2}{m_n^2}\frac{C_2}{\delta^2}m_n + 2C_1^2 \delta^{2k} \\
    & = \frac{2C_2}{m_n \delta^2} +2 C_1^2 \delta^{2k}. 
\end{align*} 
In the above, (a) follows from the fact that $\|x+y\|^2 \leq 2(\|x\|^2 + \|y\|^2)$ and (b) follows from \eqref{eq:bias_new}.  
\end{proof}

\begin{lemma} \label{lem:cubic}
    Let $\bar{\mathcal{H}}^k$ be computed as in Algorithm \ref{alg:CR_ZON} with $b_n$ samples and  let $b_n \geq C(d) = 4(1 + 2 \log 2d)$. Then under assumptions \ref{ass:cont_diff} - \ref{ass:noise_bound}, we have
    %where $H_{\delta} (\theta_{k-1}, \xi, \Delta_i)$ is defined in () and $b_k \geq 4(1 + 2\log 2d)$. Under assumptions (A1) - (A1), we have 
    \begin{align*}
    &
        \E \left[\|\bar{{\mathcal{H}}}^k - \nabla^2 F(\theta_{n-1})\|^3\right] \leq
        \sqrt{\Big(\frac{4C_4C(d)}{b_n \delta^4} + 2C_3^2\delta^{k}\Big)\Big(\frac{24C_4^2}{b_n^2 \delta^8} + 8C_3^4 \delta^{2k}\Big)},  
    \end{align*} 
    for some constants $C_3, C_4$. 
%    where $\|\E[a-b] \|\leq C_3 \delta^k$ and $\E[\|c-d\|^2] \leq \frac{C_4}{\delta^4}$. 
\end{lemma}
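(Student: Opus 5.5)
Let $M = \nabla^2 F(\theta_{n-1})$ and write $\bar{\mathcal{H}}^k - M = (\bar{\mathcal{H}}^k - \E[\bar{\mathcal{H}}^k]) + (\E[\bar{\mathcal{H}}^k] - M) =: Z + B$, where expectations are conditional on $\mathcal{F}_n$. The first step is to reduce the third moment to a product of a second and a fourth moment by Cauchy--Schwarz:
\[
\E\|\bar{\mathcal{H}}^k - M\|^3 \le \sqrt{\E\|\bar{\mathcal{H}}^k - M\|^2\,\E\|\bar{\mathcal{H}}^k - M\|^4}.
\]
This explains the square root and the two factors in the claimed bound. For the second moment, apply $\|x+y\|^2\le 2\|x\|^2+2\|y\|^2$. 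For the fourth moment, apply $\|x+y\|^4\le 8\|x\|^4+8\|y\|^4$. It then remains to bound $\E\|Z\|^2$, $\E\|Z\|^4$ and $\|B\|$ separately.

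For the bias term, Lemma~\ref{lemma:bias_unbalanced} gives entrywise bias $O(\delta^k)$. Since the dimension is fixed, passing to the operator norm gives $\|B\|\le C_3\delta^k$. Hence $2\|B\|^2\le 2C_3^2\delta^{2k}$, which is at most $2C_3^2\delta^k$ for $\delta\le 1$, and $8\|B\|^4\le 8C_3^4\delta^{4k}\le 8C_3^4\delta^{2k}$. These match the bias parts of the two factors; any looseness in the exponents is harmless because the claim is only an upper bound.

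The main step is the fluctuation term. Write $Z=\frac{1}{b_n}\sum_{j=1}^{b_n} Y_j$, with $Y_j = \widetilde{\mathcal{H}}^k F(\theta_{n-1},\xi_{j,n}) - \E[\cdot]$. The $Y_j$ are i.i.d., mean-zero, symmetric $d\times d$ random matrices. Lemma~\ref{lemma:bias_unbalanced} and Assumption~\ref{ass:noise_bound} give $\E\|Y_j\|^2\le C_4/\delta^4$. For the second moment I will use a matrix Rosenthal/Khintchine-type inequality (cf.\ Tripuraneni et al.\ / Chen--Gittens--Tropp): for i.i.d.\ centred matrices,
\[
\E\Big\|\sum_j Y_j\Big\|^2 \le C(d)\sum_j \E\|Y_j\|^2, \qquad C(d)=4(1+2\log 2d).
\]
This yields $2\E\|Z\|^2\le 2C(d)C_4/(b_n\delta^4)$. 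That is within the stated $4C_4C(d)/(b_n\delta^4)$, with spare constant to absorb a symmetrization factor. For the fourth moment, I will expand $\E\|\sum_j Y_j\|_F^4$ directly. Mean-zero independence kills every term in which some index appears exactly once, so only pairings survive. Their number is $b_n + 3b_n(b_n-1)\le 3b_n^2$, which gives $\E\|Z\|^4\le 3\sup_j\E\|Y_j\|^4/b_n^2$.

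This step needs a fourth-moment bound $\E\|Y_j\|^4\le C_4^2/\delta^8$. It follows by the same argument as Lemma~\ref{lemma:bias_unbalanced}: $\widetilde{\mathcal{H}}^k$ is $\delta^{-2}(\Delta\Delta^T-I)$ times a fixed finite combination of noisy function values, and Gaussian $\Delta$ has all moments. The paper only assumes second moments of $f$, so this is the main obstacle. I would either strengthen Assumption~\ref{ass:noise_bound} to bounded fourth moments, or treat the fourth-moment constant as part of $C_4$. With that bound, $8\E\|Z\|^4\le 24C_4^2/(b_n^2\delta^8)$. Combining the pieces gives the stated bound. The hypothesis $b_n\ge C(d)$ is used only where the matrix concentration inequality needs the $\log d$ term dominated, i.e.\ to keep its constant at $C(d)$.
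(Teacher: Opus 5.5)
Your proposal is essentially the paper's proof: Cauchy--Schwarz to split the third moment into the second and fourth moments, the factor-$2$ and factor-$8$ bias--fluctuation splits, a Tropp-type matrix inequality for the second moment, the pairing count $3b_n^2$ for the fourth moment (the paper cites Lemma~16 of \cite{maniyar2024cubic}), and the loosening $\delta^{2k}\le\delta^{k}$, $\delta^{4k}\le\delta^{2k}$. The obstacle you flag is real and the paper has it too: it passes from $\E\|\Delta_{i,n}\|^2\le C_4/\delta^4$ to $\E\|\Delta_{i,n}\|^4\le C_4^2/\delta^8$ without justification, although Jensen's inequality goes the other way, so both versions need a bounded fourth-moment condition on the noisy measurements beyond Assumption~\ref{ass:noise_bound}.
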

\begin{proof}
%    See Sec. []. 
From Lemma \ref{lemma:bias_unbalanced}, we have 
\begin{align}
    &
    \Big\|\E\Big[\Tilde{\mathcal{H}}^k F(\theta,\xi)\Big] - \nabla^2 F(\theta)\Big\| \leq C_3 \delta^k  \;\;\; \textrm{and} \nonumber \\
    & \E\Big[\|\Tilde{\mathcal{H}}^k F(\theta,\xi) -\E[\Tilde{\mathcal{H}}^k F(\theta,\xi)]\|^2\Big ] \leq \frac{ C_4}{\delta^4}, \label{eq: bias_var} 
\end{align}
for some constants $C_3, C_4$. 
Note that 
    \begin{align} 
    &
        \E\left[\|\bar{\mathcal{H}}^k - \nabla^2 F(\theta_{n-1})\|^2\right] \nonumber \\
        & = \E\Big[\|\bar{\mathcal{H}}^k -\E[\Tilde{\mathcal{H}}^kF(\theta_{n-1},\xi_{i,n})]+ \E[\Tilde{\mathcal{H}}^kF(\theta_{n-1},\xi_{i,n})] - \nabla^2 F(\theta_{n-1})\|^2\Big] \nonumber\\
        & \leq 2\Big( \E\Big[\|\bar{\mathcal{H}}^k -\E[\Tilde{\mathcal{H}}^k F(\theta_{n-1},\xi_{i,n})]\|^2 + \E[\|[\E[\Tilde{\mathcal{H}}^k F(\theta_{n-1},\xi_{i,n})] - \nabla^2 F(\theta_{n-1})]\|^2\Big]\Big) \nonumber\\ 
        & \leq 2\E\Big[\|\bar{\mathcal{H}}^k -\E[\Tilde{\mathcal{H}}^k F(\theta_{n-1},\xi_{i,n})]\|^2\Big] + 2C_3^2 \delta^{2k}. \label{eq: var_tropp}
    \end{align} 
We know from Theorem 1 in \cite{tropp2016expected} that 
\begin{align*}
&
    \E\Big[\|\bar{\mathcal{H}}^k - \E[\Tilde{\mathcal{H}}^k F(\theta_{n-1},\xi_{i,n})]\|^2\Big] \leq \frac{2C(d)}{b_n^2}\Big(\Big\|\sum_{i=1}^{b_n}\E[\Delta_{i,n}^2]\Big\|  + C(d) \E\Big[\max_{i} \|\Delta_{i,n}\|^2\Big]\Big),
\end{align*}
where $\Delta_{i,n} = \Tilde{\mathcal{H}}^k F(\theta_{n-1},\xi_{i,n}) - \E[\Tilde{\mathcal{H}}^k F(\theta_{n-1},\xi_{i,n})]$ and $C(d) = 4(1+2\log 2d)$. 
Using (\ref{eq: bias_var}), we have $\E[\|\Delta_{i,n}\|^2] \leq \frac{C_4}{\delta^4}$. 
Thus from (\ref{eq: var_tropp}), we get
\begin{align}
     \E\Big[\|\bar{\mathcal{H}}^k - \nabla^2 F(\theta_{k-1})\|^2\Big] \leq \frac{4C_4 C(d)}{b_n\delta^4} + 2C_3^2 \delta^{2k}.  \label{eq:mse_2}
\end{align} 
%For notational simplicity, we set $\Tilde{\mathcal{H}}^k(\theta_{n-1}) \equiv \Tilde{\mathcal{H}}^k(\theta_{n-1},\xi_n)$ and $\Delta_{n} \equiv \Delta_{i,n}$. 
Now 
\begin{align} 
    &
        \E\Big[\|\bar{\mathcal{H}}^k - \nabla^2 F(\theta_{n-1})\|^4\Big] \nonumber \\
        & = \E[\|\bar{\mathcal{H}}^k -\E[\Tilde{\mathcal{H}}^k(\theta_{n-1},
        \xi_{i,n})]  + \E[\Tilde{\mathcal{H}}^k(\theta_{n-1},\xi_{i,n}) - \nabla^2 F(\theta_{n-1})\|^4] \nonumber\\
        & \overset{(c)}{\leq} 8\Big( \E\Big[\|\bar{\mathcal{H}}^k -\E[\Tilde{\mathcal{H}}^k(\theta_{n-1},\xi_{i,n})\|^4 +\|\E[\Tilde{\mathcal{H}}^k(\theta_{n-1},\xi_{i,n}) \hspace{-1mm}- \hspace{-1mm}\nabla^2 F(\theta_{n-1})]\|^4\Big]\Big) \nonumber\\
        & \overset{(d)}{\leq} 8\E\Big[\|\bar{\mathcal{H}}^k -\E[\Tilde{\mathcal{H}}^k(\theta_{n-1},\xi_{i,n})\|^4 \Big]+ 8C_3^4 \delta^{4k}. \label{eq:bound_4}
    \end{align}  
Here (c) follows from the fact that $\|x+y\|^4 \leq 8(\|x\|^4 + \|y\|^4)$ and (d) follows from \eqref{eq: bias_var}. Now consider   
\begin{align*}
&
    \E\Big[\Big\|\bar{\mathcal{H}}^k -\E[\Tilde{\mathcal{H}}^kF(\theta_{n-1},\xi_{i,n})]\Big\|_F^4\Big]\\
    & = \E\Big[\Big\|\frac{1}{b_n}\sum_{i=1}^{b_n} \Tilde{\mathcal{H}}^k F(\theta_{n-1},\xi_{i,n}) - \E[\Tilde{\mathcal{H}}^k (\theta_{n-1},\xi_{i,n})]\Big\|_F^4\Big]\\
    & \overset{(e)}{=}\frac{1}{b_n^4} \E\Big[\Big\|\sum_{i=1}^{b_n} \Big[\Tilde{\mathcal{H}}^k(\theta_{n-1},\xi_{i,n}) - \E[\Tilde{\mathcal{H}}^k(\theta_{n-1},\xi_{i,n})]\Big]\Big\|_F^4\Big]\\
    & \overset{(f)}{\leq} \frac{1}{b_n^4} 3b_n^2\E\Big[\|\Delta_{i,n}\|^4\Big] \leq \frac{3C_4^2}{\delta^8 b_n^2}, 
\end{align*}  
where $\|.\|_F$ denotes the Frobenius norm.  In the above, (e) follows from the norm property and (f) follows from Lemma 16 in \cite{maniyar2024cubic}. 
Thus from \eqref{eq:bound_4}, we have 
\begin{align}
    \E\Big[\|\bar{\mathcal{H}}^k - \nabla^2 F(\theta_{n-1})\|^4\Big] = \frac{24C_4^2}{\delta^8 b_n^2} + 8C_3^4\delta^{4k}. \label{eq:mse_4}
\end{align}
Now 
\begin{align*}
    &
    \E\Big[\Big\|\bar{H}^k - \nabla^2 F(\theta_{n-1})\Big\|^3\Big] \\
    & \overset{(g)}{\leq} \Bigg(\E\Big[\Big\|\bar{H}^k - \nabla^2 F(\theta_{n-1})\Big\|^2\Big]\E\Big[\Big\|\bar{H}^k - \nabla^2 F(\theta_{n-1})\Big\|_F^4\Big]\Bigg)^{\frac{1}{2}} \\
    &  \overset{(h)}{\leq} \sqrt{\Big(\frac{4C_4C(d)}{b_n \delta^4} + 2C_3^2\delta^{2k}\Big)\Big(\frac{24C_4^2}{b_n^2 \delta^8} + 8C_3^4 \delta^{4k}\Big)}\\ 
    & \overset{(i)}{\leq} \sqrt{\Big(\frac{4C_4C(d)}{b_n \delta^4} + 2C_3^2\delta^{k}\Big)\Big(\frac{24C_4^2}{b_n^2 \delta^8} + 8C_3^4 \delta^{2k}\Big)}. 
\end{align*} 
In the above, (g) follows from the Holder inequality, (h) follows from \eqref{eq:mse_2} and \eqref{eq:mse_4}, and (i) holds as $\delta<1$. 
\end{proof}
%\todoi{State Lemma 9 and Lemma 10 here.} 
Now we state the following two lemmas from \cite{balasubramanian2022zeroth} to prove the main result. 
\begin{lemma} \label{lemma: lemma9}
    Let the iterates $\{\theta_n\}$ be computed by Algorithm \ref{alg:CR_ZON}. Then we have 
    \begin{align*}
    &
        \sqrt{\E\Big[\|\theta_n - \theta_{n-1}\|^2\Big]} \geq \max \Bigg\{\sqrt{\frac{\E[\|\nabla F(\theta_{n-1})\|] - \delta^g_n - \delta^{\mathcal{H}}_n}{L_{\mathcal{H}} + \alpha_k}}, \frac{-2}{\alpha_n + L_{\mathcal{H}}}[\E[\lambda_{\min}(\nabla^2 F(\theta_n))] + \sqrt{2({\alpha_n + L_{\mathcal{H}})\delta^{\mathcal{H}}_n}}]\Bigg\},
    \end{align*}
    where $ \left\|\E [\bar{G}^k F(\theta_{n-1})] - \nabla F(\theta_{n-1})\right\| \leq (\delta^g_n)^2$ and $\E \left[\|\bar{\mathcal{H}}^k \hspace{-1mm}- \hspace{-1mm}\nabla^2 F(\theta_{n-1})\|^3\right] \leq [2(\alpha_n + L_{\mathcal{H}})\delta^{\mathcal{H}}_n]^{3/2}$, for some $\delta_n^g, \delta_n^{\mathcal{H}} > 0$. 
\end{lemma}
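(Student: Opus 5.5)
The plan follows \cite{balasubramanian2022zeroth} (their Lemma 9) and works with the optimality conditions of the cubic subproblem. Write $h_n = \theta_n - \theta_{n-1}$, $g=\bar G^k$ and $H=\bar{\mathcal H}^k$. Since $\theta_n$ is a global minimizer of $\tilde\rho_h(\cdot,\theta_{n-1},g,H,\alpha_n)$, the standard characterization of \cite{nesterov2006cubic} gives
\begin{align*}
g + H h_n + \tfrac{\alpha_n}{2}\|h_n\| h_n = 0, \qquad H + \tfrac{\alpha_n}{2}\|h_n\| I \succeq 0 .
\end{align*}
Both lower bounds in the maximum are derived pathwise from these two relations. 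I then take expectations and use $\sqrt{\E\|h_n\|^2}\ge \E\|h_n\|$ together with Jensen's inequality $\E\|h_n\|^2 \ge (\E\|h_n\|)^2$.

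\emph{Hessian term.} By Weyl's inequality and Assumption \ref{ass:smoothness_Hessian},
\begin{align*}
\lambda_{\min}(\nabla^2F(\theta_n)) \ge \lambda_{\min}(H) - \|H-\nabla^2F(\theta_{n-1})\| - L_{\mathcal H}\|h_n\| .
\end{align*}
The second optimality condition gives $\lambda_{\min}(H)\ge -\tfrac{\alpha_n}{2}\|h_n\|$. Taking expectations and using $\E\|H-\nabla^2F(\theta_{n-1})\| \le (\E\|\cdot\|^3)^{1/3} \le \sqrt{2(\alpha_n+L_{\mathcal H})\delta^{\mathcal H}_n}$, I get
\begin{align*}
-\E[\lambda_{\min}(\nabla^2F(\theta_n))] - \sqrt{2(\alpha_n+L_{\mathcal H})\delta^{\mathcal H}_n} \le \left(\tfrac{\alpha_n}{2}+L_{\mathcal H}\right)\E\|h_n\| .
\end{align*}
Since $\alpha_n = 3L_{\mathcal H}$, the constant on the right is $\tfrac{5}{2}L_{\mathcal H}\le 2(\alpha_n+L_{\mathcal H}) = 8L_{\mathcal H}$, so dividing through gives the second entry of the maximum, with room to spare.

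\emph{Gradient term and its evaluation point.} This is the step I expect to be the main obstacle. From the first optimality condition, a Taylor expansion of $\nabla F$ around $\theta_{n-1}$ gives
\begin{align*}
\nabla F(\theta_n) = \big[\nabla F(\theta_n)-\nabla F(\theta_{n-1})-\nabla^2F(\theta_{n-1})h_n\big] + \big(\nabla F(\theta_{n-1})-g\big) + \big(\nabla^2F(\theta_{n-1})-H\big)h_n - \tfrac{\alpha_n}{2}\|h_n\|h_n .
\end{align*}
The first bracket is at most $\tfrac{L_{\mathcal H}}{2}\|h_n\|^2$. The cross term is handled by Young's inequality, $\|H-\nabla^2F\|\|h_n\| \le \tfrac{\alpha_n+L_{\mathcal H}}{4}\|h_n\|^2 + \tfrac{\|H-\nabla^2F\|^2}{\alpha_n+L_{\mathcal H}}$, whose expectation is at most $\tfrac{\alpha_n+L_{\mathcal H}}{4}\E\|h_n\|^2 + \delta^{\mathcal H}_n$ by Jensen and the cubed-error hypothesis. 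The bias term contributes $\delta^g_n$, reading the hypothesis on $\bar G^k$ as a bound on the expected error $\E\|g-\nabla F(\theta_{n-1})\|$. Collecting terms yields the first entry of the maximum with $\|\nabla F(\theta_n)\|$ on the left.

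The statement, however, evaluates the gradient at $\theta_{n-1}$. Rewriting the optimality condition as $\nabla F(\theta_{n-1}) = (\nabla F(\theta_{n-1})-g) - Hh_n - \tfrac{\alpha_n}{2}\|h_n\|h_n$ leaves a term $\|H\|\|h_n\|$ that is linear, not quadratic, in $\|h_n\|$. My first attempt at this point is to bound $\|H\|$ uniformly, using Assumption \ref{ass:stability} (iterates stay in a compact set) together with the Hessian error bound, and then absorb the linear term. Absorption only works when $\|h_n\|$ is bounded below, so it fails in the small-step regime. If it fails, the fallback is to establish the inequality at $\theta_n$ as above. Because $R$ is drawn uniformly in the main theorem, I then expect the $\theta_{n-1}$ versus $\theta_n$ indexing to be immaterial up to a one-step shift of the summation range. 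I would make that reindexing explicit instead of asserting the $\theta_{n-1}$ form pathwise.
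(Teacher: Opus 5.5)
Your route is the one behind Lemma~9 of \cite{balasubramanian2022zeroth}; the paper's own proof is only that citation. However, the step that produces the second entry of the maximum is wrong. Weyl's inequality together with $\lambda_{\min}(H)\ge-\frac{\alpha_n}{2}\|h_n\|$ gives $\left(\frac{\alpha_n}{2}+L_{\mathcal H}\right)\E\|h_n\|\ge -\E[\lambda_{\min}(\nabla^2F(\theta_n))]-\sqrt{2(\alpha_n+L_{\mathcal H})\delta^{\mathcal H}_n}$, i.e.\ the prefactor $\frac{-2}{\alpha_n+2L_{\mathcal H}}$. The stated prefactor $\frac{-2}{\alpha_n+L_{\mathcal H}}$ would need the coefficient on the left to be at most $\frac{\alpha_n+L_{\mathcal H}}{2}$, which is strictly smaller than $\frac{\alpha_n}{2}+L_{\mathcal H}$. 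Your comparison with $2(\alpha_n+L_{\mathcal H})$ inverts the fraction: for $\alpha_n=3L_{\mathcal H}$ you obtain $\frac{2}{5L_{\mathcal H}}$, whereas the statement needs $\frac{1}{2L_{\mathcal H}}$. No other argument repairs this. Take $F(\theta)=g_1\theta_1+\frac a2\theta_1^2-\frac{\alpha_n r}{4}\theta_2^2+\frac{L_{\mathcal H}}{6}(\theta_1^3-3\theta_1\theta_2^2)$ with $a\ge 0$, $r>0$, $g_1=-(a+\frac{\alpha_n r}{2})r$, $\theta_{n-1}=0$, and exact gradient and Hessian. Then $\nabla^2F$ is $L_{\mathcal H}$-Lipschitz, the cubic step is $h_n=re_1$ ($e_1$ the first coordinate vector), and $\lambda_{\min}(\nabla^2F(\theta_n))=-(\frac{\alpha_n}{2}+L_{\mathcal H})r$. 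So the Nesterov--Polyak constant $\frac{2}{\alpha_n+2L_{\mathcal H}}$ is sharp, and the printed entry fails once $\delta^{\mathcal H}_n$ is small. What you actually derived holds for every $\alpha_n$, so drop the appeal to $\alpha_n=3L_{\mathcal H}$, which is a parameter of Theorem~\ref{theorem:non_asymptotic} and not a hypothesis here. It is exactly the inequality the paper itself invokes at the end of that theorem's proof. State that version and treat $\alpha_n+L_{\mathcal H}$ in the lemma as a misprint.

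The gradient entry has the same problem, and you should settle your hedge rather than defer it. With $\nabla F(\theta_{n-1})$ the claim is false. For $F=\frac a2\|\theta\|^2$ with exact information, $\|h_n\|\le\|\theta_{n-1}\|$ while $\|\nabla F(\theta_{n-1})\|=a\|\theta_{n-1}\|$. This exceeds $(L_{\mathcal H}+\alpha_n)\|h_n\|^2+\delta^g_n+\delta^{\mathcal H}_n$ for small $\|\theta_{n-1}\|$ and smaller $\delta$'s. So absorbing $\|H\|\|h_n\|$ cannot work. Reindexing cannot work either, because it pairs $\nabla F(\theta_{n-1})$ with the step $\theta_{n-1}-\theta_{n-2}$ and so proves a different inequality. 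Prove the $\theta_n$ version instead. It is precisely what Theorem~\ref{theorem:non_asymptotic} uses, since there $\E\|\nabla F(\theta_R)\|$ is controlled through $\E\|\theta_R-\theta_{R-1}\|^3$, so no shift is needed.

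That derivation needs two repairs. First, with $E=H-\nabla^2F(\theta_{n-1})$ you have $\E\|E\|^2\le(\E\|E\|^3)^{2/3}\le 2(\alpha_n+L_{\mathcal H})\delta^{\mathcal H}_n$, so your Young weighting leaves $2\delta^{\mathcal H}_n$, not $\delta^{\mathcal H}_n$. Use $\|E\|\|h_n\|\le\frac{\alpha_n+L_{\mathcal H}}{2}\|h_n\|^2+\frac{\|E\|^2}{2(\alpha_n+L_{\mathcal H})}$ instead; the coefficients $\frac{L_{\mathcal H}}{2}+\frac{\alpha_n}{2}+\frac{\alpha_n+L_{\mathcal H}}{2}$ then add up to exactly $\alpha_n+L_{\mathcal H}$. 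Second, say explicitly that you strengthen the gradient hypothesis to $\E\|\bar G^k-\nabla F(\theta_{n-1})\|^2\le(\delta^g_n)^2$, which Lemma~\ref{lem:mse_grad} supplies. The printed bias bound does not control $\E\|\bar G^k-\nabla F(\theta_{n-1})\|$: zero-mean gradient noise of size $\epsilon$ on a quadratic gives $\E\|\nabla F(\theta_n)\|\approx\epsilon$ against $\E\|h_n\|^2=O(\epsilon^2)$.
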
 
\begin{proof}
    See Lemma 9 in  \cite{balasubramanian2022zeroth}. 
\end{proof}
\begin{lemma} \label{lemma: lemma10}
    Let the iterates $\{\theta_n\}_{n=1}^N$ be computed by Algorithm \ref{alg:CR_ZON}. Then we have
    \begin{align*} 
    &
        \E\Big[\|\theta_R - \theta_{R-1}\|^3\Big] \leq \frac{36}{\sum_{k=1}^N\alpha_n} \Big[F(\theta_0) - F^* \\
        & + \sum_{k=1}^N \frac{4(\delta_n^g)^{3/2}}{\sqrt{3\alpha_n}} + \sum_{k=1}^N\left(\frac{18 \sqrt[4]{2}}{\alpha_n}\right)^2 \left((L_{\mathcal{H}} + \alpha_n)\delta_n^{\mathcal{H}}\right)^{3/2}\Big], 
    \end{align*} 
    where $\theta_R$ is chosen uniformly at random from $\{\theta_1, \theta_2, \ldots, \theta_N\}$. 
\end{lemma}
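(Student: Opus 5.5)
The plan is to adapt the potential-descent argument for stochastic cubic Newton (as in Lemma 10 of \cite{balasubramanian2022zeroth}, following \cite{nesterov2006cubic}), keeping $F$ as the only smooth object. This matters because we do not assume $f(\cdot,\xi)$ is smooth, so every smoothness bound must go through Assumption \ref{ass:smoothness_Hessian} on $F$.

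\textbf{Step 1 (one-step inequality).} Write $h_n=\theta_n-\theta_{n-1}$. Lipschitz continuity of $\nabla^2 F$ gives
\[
F(\theta_n)\le F(\theta_{n-1})+\langle\nabla F(\theta_{n-1}),h_n\rangle+\tfrac12\langle\nabla^2F(\theta_{n-1})h_n,h_n\rangle+\tfrac{L_{\mathcal H}}{6}\|h_n\|^3 .
\]
Since $\theta_n$ minimizes $\tilde\rho_h$, the first-order condition is $\bar G^k+\bar{\mathcal H}^k h_n+\frac{\alpha_n}{2}\|h_n\|h_n=0$. The second-order condition is $\bar{\mathcal H}^k+\frac{\alpha_n}{2}\|h_n\|I\succeq0$. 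Together these yield
\[
\langle\bar G^k,h_n\rangle+\tfrac12\langle\bar{\mathcal H}^kh_n,h_n\rangle\le-\tfrac{\alpha_n}{4}\|h_n\|^3 .
\]
Substituting this into the Taylor bound leaves two error terms: $\langle\nabla F(\theta_{n-1})-\bar G^k,h_n\rangle$ and $\frac12\langle(\nabla^2F(\theta_{n-1})-\bar{\mathcal H}^k)h_n,h_n\rangle$. The main term is $(\frac{L_{\mathcal H}}{6}-\frac{\alpha_n}{4})\|h_n\|^3$.

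\textbf{Step 2 (absorbing the errors).} Apply Young's inequality with exponents $(3/2,3)$ to the first error term and $(3,3/2)$ to the second. This absorbs a fraction $c\,\alpha_n\|h_n\|^3$ into the cubic term, at a cost of $O(\|\nabla F-\bar G^k\|^{3/2}/\sqrt{\alpha_n})$ and $O(\|\nabla^2F-\bar{\mathcal H}^k\|^3/\alpha_n^2)$ respectively.

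\textbf{Step 3 (expectation and summation).} Take expectations and use Jensen on $\E\|\nabla F-\bar G^k\|^{3/2}\le(\E\|\cdot\|^2)^{3/4}$. This lets us define $\delta_n^g$ and $\delta_n^{\mathcal H}$ through the moment bounds in Lemma \ref{lemma: lemma9}. The gradient error is controlled by Lemma \ref{lem:mse_grad} and the Hessian cubic moment by Lemma \ref{lem:cubic}, although for the lemma itself these quantities stay abstract. Next, sum over $n=1,\dots,N$ and telescope $F(\theta_0)-\E F(\theta_N)\le F(\theta_0)-F^*$. Multiplying by the weights $\alpha_n/\sum\alpha_n$ (which are uniform when $\alpha_n$ is constant, matching the uniform choice of $R$) and dividing through gives
\[
\E\|\theta_R-\theta_{R-1}\|^3\le\frac{C}{\sum_n\alpha_n}\Big[F(\theta_0)-F^*+\sum_n\tfrac{(\delta_n^g)^{3/2}}{\sqrt{\alpha_n}}+\sum_n\tfrac{((L_{\mathcal H}+\alpha_n)\delta^{\mathcal H}_n)^{3/2}}{\alpha_n^2}\Big].
\]
The Young constants should be tracked so that the factors become $36$, $4/\sqrt3$ and $(18\sqrt[4]{2})^2$.

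\textbf{Main obstacle.} The hardest part is bookkeeping the constants so that the stated form holds when $\alpha_n\ge L_{\mathcal H}$ is not assumed uniformly. Concretely, $\frac{\alpha_n}{4}-\frac{L_{\mathcal H}}{6}$ must stay proportional to $\alpha_n$. I would first try assuming $\alpha_n\ge L_{\mathcal H}$, which is satisfied by the choice $\alpha_n=3L_{\mathcal H}$ in Theorem \ref{theorem:non_asymptotic}, and use the coefficient $\alpha_n/12$ after absorption. Failing that, I would cite Lemma 10 of \cite{balasubramanian2022zeroth} directly, noting that its proof uses only smoothness of $F$ and the error moments, never smoothness of $f(\cdot,\xi)$.
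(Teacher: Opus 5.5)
Your proposal is correct and is essentially the paper's proof: the paper simply cites Lemma 10 of \cite{balasubramanian2022zeroth}, whose argument is exactly what you outline. That argument uses the cubic-model optimality conditions, the Hessian-Lipschitz descent inequality for $F$, Young absorption and telescoping; with $\alpha_n\ge L_{\mathcal H}$, giving $\alpha_n/36$ to each error term reproduces the constants $36$ and $4/\sqrt{3\alpha_n}$ exactly, and a Hessian constant $48\sqrt{2}\le 324\sqrt{2}=(18\sqrt[4]{2})^2$. Your two caveats are omissions in the statement rather than gaps in your argument: some condition like $\alpha_n\ge L_{\mathcal H}$ (met by $\alpha_n=3L_{\mathcal H}$) is genuinely required, and uniform sampling of $R$ coincides with the $\alpha_n$-weighted average only because $\alpha_n$ is constant.
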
 
\begin{proof}
    See Lemma 10 in \cite{balasubramanian2022zeroth}. 
\end{proof} 
% \begin{table}[h]
% \centering
% \captionsetup{format = plain}
%  \caption{Parameter error ($\pm$ standard deviation) for GSF and G2SF algorithms under the Rastrigin objective defined in \eqref{eq:rastrig} with budget $5,000$ and $10,000$.  GSF-5 corresponds to the five measurements GSF proposed in \cite{pachalyl2025generalized}, and G2SF-9 represents our nine measurements Hessian estimator. 
%  }
% \label{tab:comp_first_second}
%  \begin{subtable}{0.36\textwidth}
% 		\centering
% \begin{tabular}{|c|c|c|}
% \toprule
% \rowcolor{gray!20}
% \multicolumn{3}{|c|}{\multirow{2}{*}{\textbf{ Budget $ 5 \times 10^3$}} }\\[1em]
% %&&&\\
% \midrule
%   & \multirow{2}{*}{GSF - 5 } & \multirow{2}{*}{G2SF - 9}  \\
%    & &   \\
%  \midrule
% $d=5$ & $0.35\pm 0.13$ & $\bf{0.15\pm 0.07} $  \\
% && \\
% $d=10$  & $0.41 \pm 0.09 $& $\bf{0.22\pm 0.06}$   \\
% && \\
% $d=50$  & $0.61 \pm 0.09 $& $\bf{0.35 \pm 0.04}$   \\
% && \\
% $d=100$  & $0.81\pm 0.09 $& $\bf{0.44\pm 0.11}$   \\
%  \bottomrule
%  \end{tabular}
%  \end{subtable}
% \hspace*{1ex}
%  \begin{subtable}{0.185\textwidth}
%  		\centering
%  \begin{tabular}{|c|c|}
%  \toprule
%  \rowcolor{gray!20}
%  \multicolumn{2}{|c|}{\multirow{2}{*}{\textbf{Budget $1 \times 10^4$}}}\\[1em]
%  \midrule
%    \multirow{2}{*}{GSF-5 } & \multirow{2}{*}{G2SF-9}   \\
%    & \\
%  \midrule
%  $0.40  \pm 0.15$ & $\bf{0.15 \pm 0.17} $  \\
%  &\\
%   $0.41 \pm 0.11 $& $\bf{0.15 \pm 0.05}$  \\
%  &\\
%   $0.48 \pm 0.04 $& $\bf{0.37 \pm 0.17}$  \\
%    &\\
%      $0.67 \pm 0.07 $& $\bf{0.40 \pm 0.13}$  \\ 
%   \bottomrule
%  \end{tabular}
%  \end{subtable} 
% \end{table} 
We now prove the main claim in Theorem \ref{theorem:non_asymptotic}.

\begin{proof}\textit{\textbf{(Theorem \ref{theorem:non_asymptotic})}}
With the choice of hyperparameters defined in \eqref{eq:hyper}, Lemma \ref{lem:mse_grad} and Lemma \ref{lem:cubic} are satisfied with $\delta^g_n = \delta_n^{\mathcal{H}} = \epsilon$. Now from Lemma \ref{lemma: lemma10}, we obtain
\begin{align}
%&
    \E[\|\theta_R - \theta_{R-1}\|^3] &\leq \frac{12}{L_{\mathcal{H}}} \Big [\frac{F(\theta_0) - F(\theta^*)}{N} + \frac{4\epsilon^{3/2}}{3\sqrt{L_{\mathcal{H}}}} + \frac{288\sqrt{2}\epsilon^{3/2}}{\sqrt{L_{\mathcal{H}}}} \Big ] \\
    & \leq \frac{5000\epsilon^{3/2}}{L_{\mathcal{H}}^{3/2}}. 
\end{align} 
Using Lyapunov inequality, we have
\begin{align*}
    [\E[\|\nabla F(\theta_R - \theta_{R-1})\|^2]]^{1/2} &\leq [\E[\|\nabla F(\theta_R - \theta_{R-1})\|^3]]^{1/3}\\
    &\leq \frac{5000^{1/3}\epsilon^{1/2}}{L_{\mathcal{H}}^{1/2}}. 
\end{align*} 
From Lemma \ref{lemma: lemma9}, we obtain
\begin{align*} 
%&
    \sqrt{\|\E[\nabla F(\theta_R)]\|} \leq 35\sqrt{\epsilon},
%    &
    \E\Big[\frac{-\lambda_{\min}(\nabla^2 F(\theta_R))}{\sqrt{L_{\mathcal{H}}}}\Big]  \leq 50 \sqrt{\epsilon}. 
\end{align*} 
%\frac{-2}{\alpha_n + 2 L_{\mathcal{H}}}
The last inequality follows from the below, which was given in the proof of Lemma 9 in \cite{balasubramanian2022zeroth}. 
\begin{align}
&
    \sqrt{\E[\|\theta_n - \theta_{n-1}\|^2]} \geq\frac{-2}{\alpha_n + 2L_{\mathcal{H}}} \left[\sqrt{2(\alpha_n + L_{\mathcal{H}})\delta_n^{\mathcal{H}}} + \E[\lambda_{\min}(\nabla^2 F(\theta_n))]\right]. 
\end{align} 
Thus 
\begin{equation}
        35\sqrt{\epsilon} \geq \max \left\{\sqrt{\E[\|\nabla F(\theta_R)\|]}, \frac{-1}{50\sqrt{L_\mathcal{H}}}\E[\lambda_{\min}(\nabla^2 F(\theta_R))] \right\}. 
    \end{equation}  
This concludes the proof. 
\end{proof} 
% Using Lyapunov inequality, we have
% \begin{align*}
%     [\E[\|\nabla F(\theta_R - \theta_{R-1})\|^2]]^{1/2} &\leq [\E[\|\nabla F(\theta_R - \theta_{R-1})\|^3]]^{1/3}\\
%     &\leq \frac{5000^{1/3}\epsilon^{1/2}}{L_{\mathcal{H}}^{1/2}}. 
% \end{align*} 
% From Lemma \ref{lemma: lemma9}, we obtain
% \begin{align*} 
%     \sqrt{\|\E[\nabla F(\theta_R)]\|} &\leq 35\sqrt{\epsilon} \\
%     \E[\frac{-\lambda_{\min}(\nabla^2 F(\theta_R))}{\sqrt{L_{\mathcal{H}}}}] & \leq 50 \sqrt{\epsilon}. 
% \end{align*}  
%\todoi{Add the remaining proof. } 
% \begin{theorem}
%     Let the iterate $\{\theta_k\}$ be calculated in Algo (). The parameters are set as follows: 
%     \[\alpha_k = \mathcal{L}_H; \;\;\; \delta = O(\epsilon^{\frac{1}{2k}}); \;\;\; N = O(\frac{1}{\epsilon^{\frac{3}{2}}})\] \[ m_k = O(\frac{1}{\epsilon^{2+\frac{1}{k}}});\;\;\; b_k = O(\frac{1}{\epsilon^{1+\frac{2}{k}}}) .\] Then we have 
%     \begin{equation}
%         5\sqrt{\epsilon} \geq \max \{a,b \}. 
%     \end{equation}
% \end{theorem} 
% \begin{proof}
%     Later. We follow the proof technique from \cite{balasubramanian2022zeroth}. 
% \end{proof} 
%\todoi{have to complete today} 
%\end{proof}
 
\section{Simulation Experiments}
\label{sec:simul}
In this section, we implement the stochastic Newton method using our proposed Hessian estimators with different values of $k$ to evaluate the performance on the well-known Rastrigin objective defined below.
We are interested in solving the following $d$-dimensional minimization problem:
\begin{equation}
    \min_{\theta \in \mathbb{R}^d} F(\theta).
\end{equation}
The exact form of $F(\theta)$ is not known, instead, a noisy observation $f(\theta,\xi) = F(\theta) + \xi$ is made available. We take the noise term $\xi$ as $[\theta^T,1]z$, where $z$ is distributed over a $(d+1)$-dimensional multivariate Gaussian with zero mean and covariance $\sigma^2 \mathcal{I}_{d+1}$. In our experiments, we take $\sigma = 0.001$. 

\textbf{Rastrigin Function:} We use the $d$-dimensional Rastrigin objective function in our experiment, which is defined as follows: 
\begin{equation} 
\label{eq:rastrig}
    f(\theta,\xi) = 10d + \sum_{n = 1}^d \Big[\theta_n^2 - 10\cos{(2\pi \theta_n)}\Big] + \xi, 
\end{equation}
where $\xi$ is defined as before. The optimal value $\theta^*$ for the Rastrigin function is the $d$-dimensional vector of zeros. 

We use the following metric similar to \cite{prashanth2016adaptive}, \cite{pachalyl2025generalized} to test the performance of our proposed method: 
\begin{equation}
    \textrm{Parameter error} = \frac{\|\theta_{T}- \theta^*\|^2}{\|\theta_0 - \theta^*\|^2}, 
\end{equation}
where $T$ is the final epoch of the simulation experiments. Note that $T$ is different for different Hessian estimators for a given fixed budget (total number of function measurements). Thus, methods that have lower parameter errors would be preferred over others.

We implement the following algorithms: 
%\textbf{2SPSA:} here
\begin{enumerate}
    \item \textbf{2SPSA :} This is an SPSA-based Newton method studied in \cite{spall2000adaptive}, which requires four function measurements per iteration. Here, the perturbation parameters are chosen from a symmetric $\pm 1$-valued Bernoulli distribution. 
    \item \textbf{2RDSA-Unif :} An RDSA-based adaptive Newton method proposed in \cite{prashanth2016adaptive} with independent uniform random variables as random perturbations. This method requires three function measurements per iteration.  
    \item \textbf{G2SF:} This corresponds to the adaptive Newton method with our proposed Gaussian perturbation based Hessian estimators described in \ref{sec:unequal_truncation}. 
    \item \textbf{GSF:} This is a gradient-based method with Gaussian perturbation parameters proposed in \cite{pachalyl2025generalized}. 
\end{enumerate} 
 
\textbf{Experimental setup: } In our experiments, we set $\delta(n) = \frac{0.9}{n^{0.16667}}$, $a(n) = \frac{0.9}{(n+20)^{0.9}} $, and $b(n) = \frac{0.9}{(n+10)^{0.56}} $. We test the performance of our proposed algorithm for $d = 5, 10, 50, 100$. All the results are shown after taking average over $10$ independent runs. 

\textbf{Experimental results: } We compare the second-order methods with the first-order methods in Table \ref{tab:comp_first_second} with varied number of function measurements. This table shows that the second-order methods outperform the first-order methods. In Tables \ref{tab:GSF_3_9} and \ref{tab:G2RDSA_3_9}, we implement our proposed methods with different measurements with Gaussian and uniform perturbation parameters. This clearly shows that the second-order methods with more function measurements do perform well. This validates our theoretical findings. Finally, we compare our proposed methods with the existing methods including \cite{spall2000adaptive}, \cite{prashanth2016adaptive} and observe superior performance of these in Table \ref{tab:comp_all}.       

\begin{table}[h]
\centering
\captionsetup{format = plain}
 \caption{Parameter error ($\pm$ standard deviation) for GSF and G2SF algorithms under the Rastrigin objective defined in \eqref{eq:rastrig} with budget $5,000$ and $10,000$.  GSF-5 corresponds to the five measurements GSF proposed in \cite{pachalyl2025generalized}, and G2SF-9 represents our nine measurements Hessian estimator. 
 }
\label{tab:comp_first_second}
 \begin{subtable}{0.36\textwidth}
		\centering
\begin{tabular}{|c|c|c|}
\toprule
\rowcolor{gray!20}
\multicolumn{3}{|c|}{\multirow{2}{*}{\textbf{ Budget $ 5 \times 10^3$}} }\\[1em]
%&&&\\
\midrule
  & \multirow{2}{*}{GSF - 5 } & \multirow{2}{*}{G2SF - 9}  \\
   & &   \\
 \midrule
$d=5$ & $0.35\pm 0.13$ & $\bf{0.15\pm 0.07} $  \\
&& \\
$d=10$  & $0.41 \pm 0.09 $& $\bf{0.22\pm 0.06}$   \\
&& \\
$d=50$  & $0.61 \pm 0.09 $& $\bf{0.35 \pm 0.04}$   \\
&& \\
$d=100$  & $0.81\pm 0.09 $& $\bf{0.44\pm 0.11}$   \\
 \bottomrule
 \end{tabular}
 \end{subtable}
\hspace*{1ex}
 \begin{subtable}{0.185\textwidth}
 		\centering
 \begin{tabular}{|c|c|}
 \toprule
 \rowcolor{gray!20}
 \multicolumn{2}{|c|}{\multirow{2}{*}{\textbf{Budget $1 \times 10^4$}}}\\[1em]
 \midrule
   \multirow{2}{*}{GSF-5 } & \multirow{2}{*}{G2SF-9}   \\
   & \\
 \midrule
 $0.40  \pm 0.15$ & $\bf{0.15 \pm 0.17} $  \\
 &\\
  $0.41 \pm 0.11 $& $\bf{0.15 \pm 0.05}$  \\
 &\\
  $0.48 \pm 0.04 $& $\bf{0.37 \pm 0.17}$  \\
   &\\
     $0.67 \pm 0.07 $& $\bf{0.40 \pm 0.13}$  \\ 
  \bottomrule
 \end{tabular}
 \end{subtable} 
\end{table} 

\begin{table*}[ht] 
\centering 
 \caption{Parameter error ($\pm$ standard deviation) for our algorithm under the Rastrigin objective defined in \eqref{eq:rastrig} with three and nine measurement Hessian estimators G2SF-3 and G2SF-9, respectively, and with budget of 2,000 and 30,000.   
 }
\label{tab:GSF_3_9}
 \begin{subtable}{0.37\textwidth}
		\centering
\begin{tabular}{|c|c|c|}
\toprule
\rowcolor{gray!20}
\multicolumn{3}{|c|}{\multirow{2}{*}{\textbf{ Budget $ 2 \times 10^3$}} }\\[1em]
\midrule
  & \multirow{2}{*}{G2SF - 3} & \multirow{2}{*}{G2SF - 9}  \\
   & &   \\ 
 \midrule
$d = 5$ & $0.40 \pm 0.14$ & $\bf{0.32 \pm 0.12} $  \\
&& \\
$d = 10$  & $0.46\pm0.15 $& $\bf{0.31\pm 0.15}$   \\
&& \\
$d = 50$  & $0.48 \pm 0.10 $& $\bf{0.43 \pm 0.05}$   \\
&& \\
$d = 100$  & $\bf{0.45 \pm 0.05} $& $\bf{0.45 \pm 0.04}$   \\
 \bottomrule
 \end{tabular}
 \end{subtable}
\hspace*{1ex}
 \begin{subtable}{0.13\textwidth}
 		\centering
 \begin{tabular}{|c|c|}
 \toprule
 \rowcolor{gray!20}
 \multicolumn{2}{|c|}{\multirow{2}{*}{\textbf{Budget $3 \times 10^4$}}}\\[1em]
 \midrule
   \multirow{2}{*}{G2SF - 3} & \multirow{2}{*}{G2SF - 9}   \\
   & \\
 \midrule
 $0.55 \pm0.10$ & $\bf{0.11 \pm 0.28} $  \\
 &\\
  $0.57 \pm 0.18 $& $\bf{0.17 \pm 0.16}$  \\
 &\\
  $0.52 \pm 0.14 $& $\bf{0.34 \pm 0.15}$  \\
   &\\
     $0.48 \pm 0.08 $& $\bf{0.34 \pm 0.10}$  \\ 
  \bottomrule
 \end{tabular}
 \end{subtable}
 
\end{table*}  
\centering 
\begin{table*}[ht]
\captionsetup[subtable]{position = below}
	\captionsetup[table]{position=top}
 \caption{Parameter error ($\pm$ standard deviation) under the Rastrigin objective defined in \eqref{eq:rastrig} with three measurement G2RDSA (G2R-3) and nine measurement G2RDSA(G2R-9) Hessian estimators, with budget of 5,000 and 10,000.
 }
\label{tab:G2RDSA_3_9} 
\centering 
 \begin{subtable}{0.36 \textwidth}
		\centering
\begin{tabular}{|c|c|c|}
\toprule
\rowcolor{gray!20}
\multicolumn{3}{|c|}{\multirow{2}{*}{\textbf{ Budget $ 5 \times 10^3$}} }\\[1em]
\midrule
  & \multirow{2}{*}{G2R-3} & \multirow{2}{*}{G2R-9}  \\
   & &   \\
 \midrule
$d = 5$ & $0.52 \pm 0.20$ & $\bf{0.20 \pm 0.14} $  \\
&& \\
$d = 10$  & $0.52 \pm0.13 $& $\bf{0.29\pm 0.11}$   \\
&& \\
$d = 50$  & $0.46 \pm0.10 $& $\bf{0.44 \pm 0.05}$   \\
&& \\
$d = 100$  & $0.48\pm 0.06 $& $\bf{0.45 \pm0.03}$   \\
 \bottomrule
 \end{tabular}
 \end{subtable}
\hspace*{1ex}
 \begin{subtable}{0.16\textwidth}
 		\centering
 \begin{tabular}{|c|c|}
 \toprule
 \rowcolor{gray!20}
 \multicolumn{2}{|c|}{\multirow{2}{*}{\textbf{Budget $1 \times 10^4$}}}\\[1em]
 %&&&\\
 \midrule
   \multirow{2}{*}{G2R - 3} & \multirow{2}{*}{G2R - 9}   \\
   & \\
 \midrule
 $0.63 \pm 0.19$ & $\bf{0.22 \pm 0.13} $  \\
 &\\
  $0.47\pm 0.14 $& $\bf{0.26 \pm0.06}$  \\
 &\\
  $0.45 \pm0.07 $& $\bf{0.37 \pm0.05}$  \\
   &\\
     $0.51 \pm0.11 $& $\bf{0.41 \pm0.02}$  \\ 
  \bottomrule
 \end{tabular}

 \end{subtable}
 
\end{table*}   
\begin{table*}
\centering
\captionsetup[subtable]{position = below}
	\captionsetup[table]{position=top}
 \caption{ Parameter error ($\pm$ standard deviation) comparison with existing approaches for the Rastrigin objective defined in \eqref{eq:rastrig} for $d = 10$ and budget of 5,000 and 10,000.  
 }
\label{tab:comp_all}
 \begin{subtable}{0.55\textwidth}
% \centering
\begin{tabular}{|c|c|c|c|c|}
\toprule
\rowcolor{gray!20}
\multicolumn{5}{|c|}{\multirow{2}{*}{Comparison with existing methods} }\\[1em]
\midrule
  \multirow{2}{*}{Budget}  & \multirow{2}{*}{2SPSA\cite{spall2000adaptive}} & \multirow{2}{*}{2RDSA\cite{prashanth2016adaptive}} & \multirow{2}{*}{G2R-9} & \multirow{2}{*}{G2SF-9} \\
  && &&\\
%    &\cite{spall2000adaptive} & \cite{prashanth2016adaptive} &(ours) &(ours)\\
 \midrule
$ 5 \times 10^3$ & $0.58 \pm 0.15$ & $0.44 \pm 0.14 $ & $\bf{0.29 \pm 0.11}$&$\bf{0.22 \pm 0.06}$ \\
&& &&\\
 $ 1 \times 10^4$  & $0.55 \pm 0.18$ & $0.45 \pm 0.14$ &$\bf{0.26 \pm 0.06}$ & $\bf{0.15 \pm 0.05}$\\
%&& &\\ 
%$d = 50$  & $.46 \pm.10 $& $.44 \pm .05$ &  \\
%&& &\\
%$d = 100$  & $.48\pm .06 $& $.45 \pm.03$ &  \\
 \bottomrule
 \end{tabular}
 \end{subtable}
\end{table*}  

\section{Conclusions} 
\label{sec:con} 
We proposed a family of Hessian estimators that utilize only function measurements and demonstrated that estimators with more function measurements yield lower-order bias. We analyzed the asymptotic convergence of the stochastic Newton method with our proposed estimators. Also, we established a non-asymptotic bound for our proposed algorithm, which is efficient for escaping saddle points. 

As future work, it would be interesting to explore the stochastic Newton methods with our proposed estimators in real-world applications such as natural language processing and navigation in autonomous systems.

%\bibliographystyle{plain}        % Include this if you use bibtex 
%\bibliography{refs_Autom}           % and a bib file to produce the 
%\clearpage 
\bibliographystyle{plain}
\bibliography{refs_Autom}

\end{document}